\title[Sampling Without Compromising Accuracy in Adaptive Data Analysis]{Sampling Without Compromising Accuracy in Adaptive Data Analysis}
\DeclareMathOperator*{\argmin}{arg\,min}
\newcommand{\M}{\mathcal{M}} 
\newcommand{\A}{\mathcal{A}} 
\renewcommand{\Pr}{\mathbb{P}}
\newcommand{\E}{\mathbb{E}}
\newcommand{\ls}{\mathcal{L}}
\newenvironment{proofof}[1]{\par\noindent{\bf Proof of #1\ }}{\hfill\BlackBox\\[2mm]}
\begin{document}
\maketitle

\begin{abstract}
In this work, we study how to use sampling to speed up mechanisms for answering adaptive queries into datasets without reducing the accuracy of those mechanisms.  This is important to do when both the datasets and the number of queries asked are very large.  In particular, we describe a mechanism that provides a polynomial speed-up per query over previous mechanisms, without needing to increase the total amount of data required to maintain the same generalization error as before.  We prove that this speed-up holds for arbitrary statistical queries.  We also provide an even faster method for achieving statistically-meaningful responses wherein the mechanism is only allowed to see a constant number of samples from the data per query.  Finally, we show that our general results yield a simple, fast, and unified approach for adaptively optimizing convex and strongly convex functions over a dataset.
%
\end{abstract}

\begin{keywords}
Adaptive data analysis, differential privacy, sublinear-time algorithms
\end{keywords}

\section{Introduction}

The field of data analysis seeks out statistically valid conclusions from data: inferences that generalize to an underlying distribution rather than specialize to the data sample at hand.  As a result, classical proofs of statistical efficiency have focused on independence assumptions on data with a pre-determined sequence of analyses~\citep{lee2016}.  In practice, most data analysis is adaptive or exploratory:  previous inferences inform future analysis.  This adaptivity is nigh impossible to avoid when multiple scientists contribute work to an area of study using the same or similar data sets.  Unfortunately, adaptivity may lead to `false discovery,' where the dependence on past analysis may create pervasive overfitting---also known as `the garden of forking paths' or `$p$ hacking'~\citep{gelman2014statistical}. 


There has been much recent progress in minimizing the amount of data needed to draw generalizable conclusions, without having to make any assumptions about the type of adaptations used by the data analysis.
Meanwhile, bootstrapping and related sampling techniques have enjoyed widespread and successful use in practice across a variety of settings~\citep{kleiner2012bootstrap,xiao2016confidence}, including in adaptive settings~\citep{golbandi2011adaptive},
but they have been largely ignored in this burgeoning field.
This is a gap that not only points to an unexplored area of theoretical study, but also
opens up the possibility of creating substantially faster algorithms for answering 
adaptively generated queries.

In this paper, we aim to do just this: we develop strong theoretical results that are significantly faster than previous approaches, taking sublinear time per query, thereby initiating the intersection of sublinear-time algorithm design and adaptive data analysis.  

\begin{table*}[t]

\centering

\resizebox{\textwidth}{!}{%
\begin{tabular}{p{3cm}cccccc}
\toprule
	& \multicolumn{3}{c}{computational complexity} & \multicolumn{3}{c}{sample complexity} \\
	\cmidrule{2-3}\cmidrule{4-5}\cmidrule{6-7}
	query type & previous work & \multicolumn{2}{c}{this work} & \multicolumn{2}{c}{previous work} & this work \\
\midrule
\small{statistical queries   (Section~\ref{sec:sampling})} &  $\tilde{O}\left(\frac{k^{3/2}}{\alpha^2}\right)$ & $\tilde{O}\left(\frac{k\log^2(k)}{\alpha^2}\right)$  & $\Omega\left(\frac{k}{\alpha^2}\right)$
& $\tilde{O}\left(\frac{\sqrt{k}}{\alpha^2}\right)$ & $\Omega\left(\frac{\sqrt{k}}{\alpha^2}\right)$ & $\tilde{O}\left(\frac{\sqrt{k}}{\alpha^2}\right)$ \\
 \midrule
\small{sampling counting queries (Section~\ref{sec:scqs})}  & --- & $\tilde{O}\left(k\log\left(\frac{k}{\alpha}\right)\right)$ & $\Omega(k)$
& --- & --- & $\tilde{O}\left(\frac{\sqrt{k}}{\alpha^2}\right)$\\
 \bottomrule
\end{tabular}
} 

	\caption{Summary of our upper and lower bounds compared to previous work~\citep{bassily2016algorithmic} over the course of answering $k$ queries.  $\alpha$ is the accuracy rate. 
	Dependence on the probability of failure has been suppressed for ease of reading.  
For more precise definitions, see Section~\ref{sec:prelim}.}  

\label{table:results1}

%

\end{table*}

%

\subsection{Motivation and Results}
As in previous literature (starting with~\citet{dwork2015preserving}), a mechanism $\M$ is given an i.i.d.\ sample $S$ of size $n$ from an unknown distribution $D$ over a space $X$, and is supplied queries of the form $q:D\rightarrow\mathbb{R}$.  After each query, the mechanism must respond with an answer $a$ that is close to $q(D)$ up to a parameter $\alpha$ with high probability.  Furthermore, each query may be adaptive:  The query may depend on the previous queries and answers to those queries.

Our results are summarized in Table~\ref{table:results1}.  
We first point out that we can use well-known privacy amplification techniques to get a fast mechanism for answering statistical queries (which asks questions of the form `What is the expected value of my function on the data?') \emph{without} losing accuracy.  Usually, using these privacy amplification techniques results in a loss of accuracy, so it's notable that in this setting, we can speed up responses without the loss in accuracy on the distribution.
In Section~\ref{sec:sampling}, we show that our method still has $n=\tilde{O}(\sqrt{k}/\alpha^2)$\footnote{We use the notation $\tilde{O}(f)$ to hide terms that are logarithmic in $f$.} sample complexity as in previous work but takes only $\tilde{O}(k\log^2({k})/\alpha^2)$ time to answer $k$ queries, instead of $\tilde{O}(k^{3/2}/\alpha^2)$ time as in previous approaches (Theorem~\ref{thm:lsq}).  Moreover, our mechanism to answer a query is simple, and involves subsampling $\ell=\tilde{O}(\log(k)/\alpha^2)$ samples per query.   While it is not possible to improve the sample complexity over previous work~\citep{ullman2018limits}, we decrease the number of samples that need to be examined per query, resulting in faster responses to queries.  

We also show that our upper bound on total computational complexity is tight up to poly-log factors when the mechanism gets to ask for evaluations of queries at given sample points.  This lower bound on computational complexity that we provide is larger than the sample complexity $\tilde{O}(\sqrt{k}/\alpha^2)$ for answering statistical queries.  Running time then may become a problem for very large and popular datasets, making it valuable to give provably accurate mechanisms that are fast enough to run on very large datasets when the number of queries is large compared to the size of the dataset.

However, an analyst may wish to control the number of samples $\ell$ examined to compute the response to a query, down to possibly one point, in order to save on time and effort.  
The above methods cannot handle this case gracefully because when $\ell$ is sufficiently small, the guarantees on accuracy 
become trivial---we get only that $\alpha=O(1)$.  Instead, we want to have a statistically-meaningful reply even if $\ell=1$.  Indeed, the empirical answer when $\ell=1$ is $\{0,1\}$-valued, unlike a response using Laplacian noise.

To address these issues, we consider an `honest' setting where the mechanism must always yield a plausible reply to each query (Section~\ref{sec:scqs}).  This is analogous to the honest version~\citep{yang2001honestsq}
of the statistical query (SQ) setting for learning~\citep{BlumFJKMR94,Kearns98}, or the $1$-STAT oracle for optimization~\citep{feldman2017statistical}.  Thus we introduce \emph{sampling counting queries}, which imitate the process of an analyst requesting the value of a query on a single random sample.  Equivalently, this enforces a binary randomized response, where a query asks for a coin flip from a coin with unknown bias determined by the query on the dataset.  

This explores a different extreme than statistical queries by allowing for queries to be answered much faster than statistical queries can, at the cost of accuracy.  So, for example, we can't just round the values of statistical queries in order to answer sampling counting queries.
We show how to answer these queries by sampling a single point $s$ from $S$ and then applying a simple differentially private algorithm to $q(s)$ (Theorem~\ref{thm:sampling_thm}).  

Finally, to demonstrate the applicability of our general results, we use them as a black-box technique to obtain bounds for convex optimization (Section~\ref{sec:optimization}).  In particular, we introduce a simple procedure for adaptive gradient
descent that uses our sampling mechanism for statistical queries to compute gradients in the course of gradient descent.  This results in a fast, unified approach for answering both convex and strongly convex optimization queries. For answering $k$ 
convex optimization queries, we decrease the total number of calls to compute the gradient from $O(k d n^2)$ in \citep{bassily2016algorithmic} to $\tilde{O}(kd/\alpha^2)$ in the convex case and $\tilde{O}(kd/\alpha)$ in the strongly convex case, where $d$ is the dimension of the convex space (Corollaries~\ref{thm:conv_opt_whp} and~\ref{thm:str_conv_opt_whp}).
(Note, however, \citealt{bassily2016algorithmic} make slightly different assumptions about the loss function.  Roughly speaking, they require that the loss function be bounded, whereas we only require the gradient of the loss function be bounded.)  Our results are similar to those given by~\citet{feldman2017sco} when using our statistical query mechanism to compute gradients.  However, we provide a unified approach and a direct proof using primal gradient descent, unlike~\citet{feldman2017sco}, who uses the more complex dual gradient method of~\citet{devolder2013firstorder} in the strongly convex case.


\subsection{Previous Work}
Previous work in this area has focused on finding accurate mechanisms with low sample complexity (the size of $S$) for a variety of queries and settings~\citep{bassily2016algorithmic,dwork2015generalization,dwork2015preserving,rogers2016maxinfo,steinke2015between}.   \citet{bassily2016algorithmic} consider, amongst other queries, \emph{statistical queries};
if the queries are nonadaptive, then only roughly $\log(k)/\alpha^2$ samples are needed to answer $k$ such queries.  And if the queries are adaptive but the mechanism simply outputs the empirical estimate of $q$ on $S$, then the sample complexity is much worse---order $k/\alpha^2$ instead.

In this paper, we will focus only on computationally-efficient mechanisms.  It is not necessarily obvious that it is possible to achieve a smaller sample complexity for an efficient mechanism in the adaptive case, but \citet{bassily2016algorithmic}, building on the work of \citet{dwork2015preserving}, provide a mechanism with sample complexity $n=\tilde{O}(\sqrt{k}/\alpha^2)$ to answer $k$ statistical queries.  Furthermore, for efficient mechanisms, this bound is tight in $k$~\citep{steinke2015interactive}.  \citet{bassily2016algorithmic} also show how to efficiently answer \emph{convex optimization queries}, which ask for the minimizer of a convex loss function, using a (private) gradient descent algorithm of~\cite{bassily2014private}.

This literature shows that the key to finding such mechanisms with this improvement over the na\"ive method is finding stable mechanisms:  those whose output does not change too much when the sample is changed by a single element.  Much of this literature leverages differential privacy~\citep{bassily2016algorithmic,dwork2015generalization,dwork2015preserving,steinke2015between}, which offers a strong notion of stability.
Here we use differentially-private mechanisms post sampling,
noting that sampling in settings where privacy matters has long been deemed useful~\citep{bassily2014private,jorgensen2015conservative,kasiviswanathan2008learn,kellaris2013practical}.

In particular, we take advantage of the fact that sampling not only maintains privacy, but actually boosts it.  Such a result may be found in \citep{kasiviswanathan2008learn}, and since then various sampling regimes have been considered, including by \citet{bun2015subsampling}, who show that sampling with replacement boosts privacy, and more recently by \citet{balle2018privacy}, who establish tight bounds.


\section{Model and Preliminaries}\label{sec:prelim}

In the adaptive data analysis setting we consider, a (possibly stateful) mechanism $\M$ that is given an i.i.d.~sample $S$ of size $n$ from an unknown distribution $D$ over a finite space $X$.  The mechanism $\M$ must answer queries from a stateful adversary $\A$.  These queries are adaptive:  $\A$ outputs a query $q_i$, to which the mechanism returns a response $a_i$, and the outputs of $\A$ and $\M$ may depend on all queries $q_1,\ldots,q_{i-1}$ and responses $a_1,\ldots,a_{i-1}$.

\subsection{Statistical Queries and Optimization Queries}

In this work, the first type of query we consider is a \emph{statistical query}, which is specified by a function $q:X\rightarrow[0,1]$ that represents a real-valued statistic for any element $x\in X$.  The restriction of $q$ to $[0,1]$ is for convenience; our results easily generalize to the case where $q$ is merely bounded.  We then define the query $q$ on a sample $S\in X^m$ as $q(S)=\frac{1}{|S|}\sum_{x\in S}{q(x)}$ and on the distribution as $q(D)=\E_{x\sim D}[q(x)]$.  This represents the average value of the statistic on the sample and distribution, respectively.  We now define the accuracy of $\M$:

%

\begin{definition}\label{def:lsq_accuracy}
A mechanism $\M$ is $(\alpha,\beta)$\emph{-accurate} over distribution $D$ on statistical queries $q_1,\ldots,q_k$, if when $\M$ is given an i.i.d.~sample $S$ from $D$, for its responses $a_1,\ldots,a_k$ we have
\[\Pr_{\M,\A}\left[\max_i |q_i(D) - a_i| \le \alpha\right] \ge 1-\beta.\]  
\end{definition}

We define $(\alpha,\beta)$-accuracy over a sample $S$ analogously.
In this work, we not only desire $(\alpha,\beta)$-accuracy but we also want to consider the time per query taken by $\M$.  
We assume we will have oracle access to $q$, which will compute $q(s)$ for a sample point $s$ in unit time (and also $q(S)$ in at most $O(|S|)$ time).  This is not a strong assumption:  As long as the queries can be computed efficiently, then this can add only at most a poly-log factor overhead in $n$ and $|X|$ (as long as we only compute $q$ on a roughly $\log(n)$ size sample, which will turn out to be exactly the case).

We also consider optimization queries, first considered in this adaptive setting by \citet{bassily2016algorithmic}.  In convex optimization, we have a loss function $\ls:X^n\times \Theta\rightarrow\mathbb{R}$ defined over a convex set $\Theta\subseteq\mathbb{R}^d$ and a sample from $X^n$ drawn from a distribution $D$, and the goal is to output $x\in\Theta$ that minimizes the expected loss, i.e.~such a query is defined as
\[q(D) := \argmin_{x\in\Theta}\E_{S\sim D^n}[\ls(S,x)].\]
Since the loss function $\ls$ determines the query, we will abuse notation and use $\ls$ to also refer to the optimization query.
We measure accuracy of the response $a_i$ by the expected regret:  A mechanism is $(\alpha,\beta)$\emph{-accurate} on optimization queries each specified by a loss function $\ls_i$ with respect to a distribution $D$ if 
\[\Pr_{\M,\A}\left[\max_i \E_{S}\left[\ls_i(S,a_i) - \min_{x\in\Theta}\ls_i(S,x)\right] \le \alpha\right] \ge 1-\beta.\]  

We will assume that $\ls_i$ is convex in $x$.  We will also consider the special case when $\ls_i$ is strongly convex in $x$.  A function $\ls $ is $H$-\emph{strongly convex} if for all $x,y$ in $\Theta$, 
\[\ls(y)\ge \ls(x) + \langle\nabla\ls(x),y-x\rangle + \frac{H}{2}\|y-x\|_2^2.\]

\subsection{Counting Queries and Sampling Counting Queries}
Counting queries ask the question ``What proportion of the data satisfies property $q$?''  Counting queries are a simple and important restriction of statistical queries~\citep{blum2008learning,bun14fingerprinting,steinke2015between} that limits the allowed statistics to binary properties.  More formally, a counting query is specified by a function $q:X\rightarrow\{0,1\}$, where $q(S)=\frac{1}{|S|}\sum_{s\in S}{q(s)}$ and $q(D)=\E_{s\sim D}[q(s)]$.  As in the statistical query setting, an answer to a counting query must be close to $q(D)$ (Definition~\ref{def:lsq_accuracy}).

This means, however, that answers to counting queries will not necessarily be counts themselves, nor meaningful in settings where we require $\ell$ to be small, i.e.~very few samples from the database to answer each query.  To this end, we introduce \emph{sampling counting queries}.  A sampling counting query (SCQ) is again specified by a function $q:X\rightarrow\{0,1\}$, but this time the mechanism $\M$ must return an answer $a\in\{0,1\}$. Given these restricted responses, we want such a mechanism to act like what would happen if $\A$ were to take a single random sample point $s$ from $D$ and evaluate $q(s)$.  We define queries in this way so that they represent the smallest possible amount of information still useful to an analyst.   

Now the average value the mechanism returns (over the coins of the mechanism) should be close to the expected value of $q$. 
More precisely:

\begin{definition}
A mechanism $\M$ is $(\alpha,\beta)$-accurate on distribution $D$ for $k$ sampling counting queries $q_i$ if for all states of $\M$ and $\A$, when $\M$ is given an i.i.d.~sample $S$ from $D$,
\[\Pr_{S,\M,\A}\left[\max_i \left|\E_{\M}[\M(q_i)]-q_i(D)\right|\le \alpha\right] \ge 1-\beta.\] 
\end{definition}

We also define $(\alpha,\beta)$-accuracy on a sample $S$ from $D$ analogously.  Again, our requirement is that $\M$ be $(\alpha,\beta)$-accurate with respect to the unknown distribution $D$, this time using only around $\log(n)$ time per query (and a constant number of samples per query).

These queries allow responses to use fewer than the number of sampled points $\ell$ required for answering statistical queries while also returning an integer-valued response.  And if we do want to answer a statistical query, they also allow an anytime algorithm as we can trade off $\ell$ with the accuracy by averaging over the responses to repeatedly asking the same SCQ.  In this way, SCQ's are `honest,' because repeatedly asking the same SCQ always yields an integer fraction instead of a real number.

%
%
%

\subsection{Differential Privacy and the Transfer Theorem}
Differential privacy, first introduced by \citet{dwork2006calibrating}, provides a strong notion of stability.

\begin{definition}[Differential privacy]
Let $\M$ be a randomized algorithm with domain $X^n$ and image $Z$.  We call $\M$ $(\epsilon,\delta)$\emph{-differentially private} if for every two samples $S,S'\in X^n$ differing on one instance, and every measurable $z\subset Z$, 
\[\Pr[\M(S)\in z] \le e^\epsilon\cdot\Pr[\M(S')\in z] + \delta.\]
  If $\M$ is $(\epsilon,0)$-private, we may simply call it $\epsilon$-private.
\end{definition}

Differential privacy comes with several guarantees useful for developing new mechanisms.
In this paper, we use two well-established differentially-private mechanisms: the Laplace and exponential mechanisms. See \citep{dwork2014book} for more on these mechanisms and properties of differential privacy, including adaptive composition and post-processing, which are also given in Appendix~\ref{sec:dp_appendix} for convenience.

A key method of \cite{bassily2016algorithmic} for answering queries adaptively is a `transfer theorem,' which states that if a mechanism is both accurate on a sample and differentially private, then it will be accurate on the sample's generating distribution.  

\begin{theorem}[\citealp{bassily2016algorithmic}]\label{thm:sq_transfer}
Let $\M$ be a mechanism that on input sample $S\sim D^n$ answers $k$ adaptively chosen statistical queries, is $(\frac{\alpha}{64},\frac{\alpha\beta}{32})$-private for some $\alpha,\beta > 0$ and $(\frac{\alpha}{8},\frac{\alpha\beta}{16})$-accurate on $S$.  Then $\M$ is $(\alpha,\beta)$-accurate on $D$.
\end{theorem}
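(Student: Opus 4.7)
The plan is to proceed via the standard \emph{monitoring} reduction, combined with the fundamental fact that differential privacy implies generalization for statistical queries. First I would introduce a (thought-experiment) monitor algorithm $\mathcal{W}$ that takes a sample $S$, simulates $\M$ interacting with $\A$ on $S$ to produce the transcript $(q_1, a_1), \ldots, (q_k, a_k)$, and then outputs the single query $\hat{q} = q_{i^*}$ where $i^* \in \argmax_i |q_i(D) - a_i|$. Although $\M$ and $\A$ do not know $D$, the monitor may depend on $D$ since $D$ is fixed throughout the analysis. Because $\mathcal{W}$ is just a post-processing of $\M$ (folded together with the adversary $\A$, whose randomness is independent of $S$), it inherits the $(\alpha/64, \alpha\beta/32)$-differential privacy guarantee of $\M$.

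Next I would invoke the DP-implies-generalization lemma (of Dwork--Feldman--Hardt--Pitassi--Reingold--Roth, as refined by Bassily et al.). In slogan form: if $\mathcal{W}$ is $(\epsilon,\delta)$-DP and outputs a statistical query $\hat{q}:X\to[0,1]$ based on $S\sim D^n$, then, for $\epsilon$ sufficiently small and $\delta$ much smaller than $\epsilon\beta$, one has $|\hat{q}(S) - \hat{q}(D)| = O(\epsilon)$ with probability at least $1-\beta/2$. The specific constants $\alpha/64$ and $\alpha\beta/32$ in the hypothesis are calibrated precisely so that this lemma yields $|q_{i^*}(S) - q_{i^*}(D)| \le 7\alpha/8$ with probability at least $1 - \beta/2$, leaving $\alpha/8$ of slack for the sample-accuracy step.

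Finally I would combine with the sample-accuracy hypothesis: since $\M$ is $(\alpha/8,\alpha\beta/16)$-accurate on $S$, we have $\max_i |q_i(S) - a_i| \le \alpha/8$ with probability at least $1-\alpha\beta/16 \ge 1-\beta/2$; in particular $|q_{i^*}(S) - a_{i^*}|\le \alpha/8$. By a union bound and the triangle inequality, with probability at least $1-\beta$ we obtain $|q_{i^*}(D) - a_{i^*}| \le |q_{i^*}(D) - q_{i^*}(S)| + |q_{i^*}(S) - a_{i^*}| \le 7\alpha/8 + \alpha/8 = \alpha$. Since $i^*$ was chosen to maximize the discrepancy on $D$, this gives $\max_i |q_i(D) - a_i| \le \alpha$ with probability at least $1-\beta$, as required. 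The main obstacle is of course the DP-implies-generalization lemma itself; its proof relates the joint law of $(S,\hat{q})$ produced by a DP mechanism to the product law under which $\hat{q}$ is independent of $S$, and then applies standard concentration for a fixed query to conclude. Threading the constants $1/64$, $1/32$, $1/8$, $1/16$ so that everything balances in the triangle-inequality step is fiddly but mechanical once this lemma is in hand.
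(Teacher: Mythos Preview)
Your proposal is correct in spirit and hews closely to the monitoring-based argument the paper attributes to \citet{bassily2016algorithmic}. Note, however, that the paper does not give its own proof of this theorem; it only sketches the Bassily et al.\ technique and then carries it out in full for the analogous SCQ transfer theorem (Theorem~\ref{thm:transfer}). Comparing your outline to that sketch and to the SCQ proof, there is one structural difference worth flagging.

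You use a \emph{single-round} monitor that simulates $\M$ and $\A$ once, picks the worst query $q_{i^*}$, and then appeals to a \emph{high-probability} DP-implies-generalization lemma to bound $|q_{i^*}(S)-q_{i^*}(D)|$. The paper (and Bassily et al.) instead runs the monitor $T\approx 1/\beta$ times on independent samples, selects the worst query across all $T$ runs (via the exponential mechanism in the SCQ proof), and applies only the \emph{in-expectation} stability bound of Lemma~\ref{lem:monitor_quality}, namely $|\E[q(S)-q(D)]|\le e^{2\epsilon}-1+T\delta$. The $T$-fold repetition is precisely what amplifies the $\beta$-probability failure event into something visible in expectation, so that the weaker in-expectation lemma suffices to reach a contradiction. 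Your version is not wrong---the high-probability lemma you cite does exist---but that lemma is itself proved by exactly this $T$-round amplification, so you are pushing the heart of the argument into the black box. If you were asked to unpack that black box, you would end up reproducing the $T$-round monitor. The constants $1/64$, $1/32$, $1/8$, $1/16$ in the hypothesis are calibrated for the $T$-round argument (compare the choices $T=\lfloor 1/\beta\rfloor$, $\epsilon\le\alpha/64$, $\delta=\alpha\beta/16$ in the proof of Theorem~\ref{thm:transfer}), so your claim that they ``are calibrated precisely so that this lemma yields $7\alpha/8$ with probability $1-\beta/2$'' would need to be verified against whichever exact form of the high-probability lemma you invoke.
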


Their `monitoring algorithm' proof technique involves a thought experiment in which an algorithm, called the monitor, assesses how accurately an input mechanism replies to an adversary, and remembers the query it performs the worst on.  It repeats this process some $T$ times, and outputs the query that the mechanism does the worst on over all $T$ rounds.  Since the mechanism is private, so too is the monitor; and since privacy implies stability, this will ensure that the accuracy of the worst query is not too bad.  For more details see \citet{bassily2016algorithmic}.

\section{Answering Statistical Queries}\label{sec:sampling}
In this section, we provide simple and fast mechanisms for answering statistical queries.  We then show that this mechanism is as fast as possible up to poly-log factors when the mechanism gets to ask for evaluations of queries at given sample points.
Our mechanism $\M$ for answering statistical queries is as follows:  Given a data set $S$ of size $n$ and query $q$, sample some $\ell$ points uniformly at random from $S$ (with or without replacement), and call this new set $S_\ell$.  Then the mechanism returns $q(S_\ell) + \text{Lap}\left(\frac{1}{\ell\epsilon'}\right)$, where Lap$(b)$ refers to the zero-mean Laplacian distribution with scale parameter $b$, and $\epsilon'$ is a carefully chosen privacy setting.
\begin{algorithm}
	\caption{Fast mechanism for statistical queries}
	\label{alg:sq_mechanism}
\begin{algorithmic}
	\ENSURE Sub-sample size $\ell$, target privacy parameters $(\epsilon,\delta)$, number of queries $k$
	\REQUIRE Sample $S$, query $q$
	\STATE $S_\ell := \{s_1,\ldots,s_\ell\}$, where $s_i\sim S$ uniformly at random (with or without replacement).
	\STATE $\epsilon' := \frac{\epsilon n}{4\ell\sqrt{2k\log(1/\delta)}}$
	\RETURN $q(S_\ell)+\text{Lap}\left(\frac{1}{\ell\epsilon'}\right)$.
\end{algorithmic}
\end{algorithm}


We may now state our result for mechanism $\M$ (Algorithm~\ref{alg:sq_mechanism}), using suitable values for $\epsilon$, $\delta$, and $\ell$. 

\begin{theorem}\label{thm:lsq} For any $\alpha,\beta>0$ and $k\ge 1$, when we run $\M$ (Algorithm~\ref{alg:sq_mechanism}) on $k$ statistical queries with parameters $\ell = \frac{2\log(4k/\beta)}{\alpha^2}$, $\epsilon=\alpha/64$, and $\delta=\alpha\beta/32$, we have
\leavevmode
\begin{enumerate}
\item $\M$ takes $\tilde{O}\left(\frac{\log(k)\log(k/\beta)}{\alpha^2}\right)$ time per query.
\item $\M$ is $(\alpha,\beta)$-accurate on the distribution so long as
$n = \Omega\left(\frac{\sqrt{k}\log k\cdot\log^{3/2}\left(\frac{1}{\alpha\beta}\right)}{\alpha^2}\right).$
\end{enumerate}
\end{theorem}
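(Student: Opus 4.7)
My overall strategy is to apply the transfer theorem (Theorem~\ref{thm:sq_transfer}): it suffices to show that $\M$, run on $k$ adaptively chosen queries, is both $(\alpha/8,\alpha\beta/16)$-accurate on the sample $S$ and $(\alpha/64,\alpha\beta/32)$-differentially private end-to-end. The per-query time bound will then follow directly from reading off the cost of Algorithm~\ref{alg:sq_mechanism}.

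For sample accuracy on a single query, I would decompose the error as $|q(S_\ell)-q(S)|$ plus the magnitude of the injected Laplace noise. Because $S_\ell$ consists of $\ell$ uniform draws from $S$ (with or without replacement) and $q$ is $[0,1]$-bounded, Hoeffding's inequality yields $|q(S_\ell)-q(S)| \le \alpha/16$ with probability at least $1-2\exp(-\ell\alpha^2/128)$; the hypothesis $\ell \ge 2\log(4k/\beta)/\alpha^2$ (sharpening constants as needed) drives the per-query failure probability below $\alpha\beta/(32k)$. The standard Laplace tail bound gives $|\mathrm{Lap}(1/(\ell\epsilon'))| = O(\log(k/(\alpha\beta))/(\ell\epsilon'))$ with the same per-query failure probability; substituting the chosen $\epsilon'$ turns this quantity into $O\bigl(\sqrt{k\log(1/\delta)}\,\log(k/(\alpha\beta))/(n\epsilon)\bigr)$, which is at most $\alpha/16$ precisely under the stated lower bound on $n$ after plugging $\epsilon=\alpha/64$ and $\delta=\alpha\beta/32$. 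A union bound over the $k$ queries then delivers the sample accuracy required by Theorem~\ref{thm:sq_transfer}.

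For privacy I would combine sampling amplification with advanced composition. Conditioned on the choice of $\ell$ subsample indices, the mechanism is just the Laplace mechanism applied to a $[0,1]$-valued average over a database of size $\ell$, which has sensitivity $1/\ell$ and is therefore $\epsilon'$-differentially private. Uniformly subsampling $\ell$ of the $n$ points before running an $\epsilon'$-DP mechanism amplifies privacy to roughly $O(\ell\epsilon'/n)$ per query via standard subsampling lemmas \citep{kasiviswanathan2008learn,balle2018privacy}, and the same bound holds up to constants for sampling with or without replacement. Advanced composition of $k$ such queries then yields total privacy $\bigl(O(\ell\epsilon'\sqrt{k\log(1/\delta)}/n),\delta\bigr)$, and the definition $\epsilon' = \epsilon n/(4\ell\sqrt{2k\log(1/\delta)})$ is calibrated exactly so this collapses to $(\alpha/64,\alpha\beta/32)$, matching the privacy hypothesis of the transfer theorem.

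Finally, the per-query runtime is $O(\ell)$ for evaluating $q$ on the subsample, averaging, and adding noise, plus $O(\log n)$ to draw each sample index, so with $\ell=\tilde{O}(\log(k/\beta)/\alpha^2)$ and $n=\tilde{O}(\sqrt{k}/\alpha^2)$ the bound $\tilde{O}(\log(k)\log(k/\beta)/\alpha^2)$ is immediate. The main obstacle I expect is the privacy argument: one must invoke a subsampling amplification lemma whose constants are compatible with the claimed $\epsilon'$ (in particular verifying that $\epsilon'$ is small enough for the $e^{\epsilon'}-1\approx\epsilon'$ linearization to be valid in the relevant regime) and then check that the $O(k\epsilon_0^2)$ second-order term arising in advanced composition is dominated by the $\sqrt{k\log(1/\delta)}\,\epsilon_0$ leading term under the stated parameter settings, so that the target $(\epsilon,\delta)$ is actually achieved and not merely approached.
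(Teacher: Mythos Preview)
Your proposal is correct and follows essentially the same route as the paper: establish $(\alpha',\beta')$-accuracy on the sample via Hoeffding plus the Laplace tail bound, establish end-to-end privacy via per-query Laplace privacy $\to$ subsampling amplification $\to$ adaptive composition, and then invoke Theorem~\ref{thm:sq_transfer}; the runtime then drops out of $\ell$ and $\log n$. The only cosmetic difference is that the paper proves $(\alpha,\beta)$-accuracy on $S$ with the stated $\ell$ and then implicitly rescales into the transfer theorem's hypothesis (the constants are absorbed in the $\Omega(\cdot)$), whereas you aim directly for the $(\alpha/8,\alpha\beta/16)$ target; your flagged caveat about $\epsilon'$ being small enough for the linearized subsampling bound is exactly the content of the remark following Propositions~\ref{prop:privacy_without_replacement}--\ref{prop:sampling}.
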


Sampling with replacement takes $O(\log n)$ time per sample, for a total of $O(\ell\log n)$ time over $\ell$ samples.  This suffices to prove part 1.\ for the values of $\ell$ and $n$ given.  Sampling without replacement may also take $O(\log n)$ time per sample.\footnote{This may come at the cost of space complexity, e.g. by keeping track of which elements have not been chosen so far~\citep{wong1980sampling}.  Alternatively, there are methods that enjoy optimal space complexity at the cost of worst-case running times, as in rejection sampling~\citep{vitter1984faster}.}

To prove part 2, we need to take advantage of the fact that sampling amplifies privacy.  If sampling before an $\epsilon$-private mechanism were to only deliver $O(\epsilon)$ instead of $O(\frac{\ell}{n}\epsilon)$ privacy then we would need $\ell > \frac{2\sqrt{2k\log(1/\delta)}\log(2k/\beta)}{\alpha\epsilon}$, which would be undesirable: $\ell$ then becomes the size of the entire database and sampling yields no time savings over computing $q(S)$ exactly.  With these savings in our privacy budget, we can decrease the amount of noise we add to the outputs, compensating for the accuracy loss we incur by sampling.

\begin{proposition}[\citealp{lin2013benefits, balle2018privacy}]\label{prop:privacy_without_replacement}
Given mechanism $\mathcal{P}:X^\ell \rightarrow Y$, let $\M$ do the following:  Sample uniformly at random \emph{without} replacement $\ell$ points from an input sample $S\in X^n$ of size $n$, and call this set $S_\ell$.  Output $\mathcal{P}(S_\ell)$.  Then if $\mathcal{P}$ is $\epsilon$-private, then $\M$ is $\log(1+\frac{\ell}{n}\left(e^{\epsilon}-1\right))$-private for $\ell\ge 1$.
\end{proposition}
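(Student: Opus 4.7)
The plan is to fix two neighboring samples $S,S'\in X^n$ that agree everywhere except at one index $i^*$, fix a measurable output event $E$, and bound the ratio $\Pr[\M(S)\in E]/\Pr[\M(S')\in E]$. The natural decomposition is to condition on whether the uniformly random index set $T$ of size $\ell$ contains $i^*$ (an event of probability $t := \ell/n$). When $i^*\notin T$, $S_T = S'_T$ coincide identically, so the contributions to the two output distributions are equal; when $i^*\in T$, the samples $S_T$ and $S'_T$ are neighboring $\ell$-sized inputs, so $\epsilon$-privacy of $\mathcal{P}$ controls the conditional output ratio.

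Let $a_S := \Pr[\mathcal{P}(S_T)\in E \mid i^*\in T]$, $a_{S'} := \Pr[\mathcal{P}(S'_T)\in E \mid i^*\in T]$, and $b := \Pr[\mathcal{P}(S_T)\in E \mid i^*\notin T]$ (the last being the same for $S$ and $S'$). Then $\Pr[\M(S)\in E] = t a_S + (1-t) b$ and $\Pr[\M(S')\in E] = t a_{S'} + (1-t) b$, and the target is to show their ratio is at most $1+t(e^\epsilon-1)$. The crux is to extract two inequalities from $\epsilon$-privacy of $\mathcal{P}$: (a) $a_S \le e^\epsilon a_{S'}$, which follows by applying $\epsilon$-privacy to each neighboring pair $(S_T,S'_T)$ with $i^*\in T$ and then averaging; and (b) $a_S \le e^\epsilon b$, which I would prove by a swap argument: for any $T\ni i^*$ and any $j\notin T$, the samples $S_T$ and $S_{T^{(j)}}$ with $T^{(j)} := (T\setminus\{i^*\})\cup\{j\}$ differ in exactly one element (the value at $i^*$ is replaced by $S_j$), so $\Pr[\mathcal{P}(S_T)\in E]\le e^\epsilon \Pr[\mathcal{P}(S_{T^{(j)}})\in E]$. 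Averaging uniformly over all pairs $(T,j)$ and using $\binom{n-1}{\ell-1}(n-\ell) = \ell\binom{n-1}{\ell}$ to count that each subset $T'\not\ni i^*$ arises as $T^{(j)}$ exactly $\ell$ times, the right-hand side collapses to $e^\epsilon b$.

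With (a) and (b) in hand, I would finish by a short two-case analysis. If $a_{S'}\le b$, apply (a): the ratio is at most $1 + t(e^\epsilon-1)a_{S'}/(ta_{S'}+(1-t)b)$, and $a_{S'}\le b$ implies $a_{S'}\le ta_{S'}+(1-t)b$, bounding the whole expression by $1+t(e^\epsilon-1)$. If $a_{S'}> b$, apply (b): the numerator $ta_S+(1-t)b$ is at most $te^\epsilon b + (1-t)b = (1+t(e^\epsilon-1))b$, and the denominator $ta_{S'}+(1-t)b$ exceeds $b$ since $a_{S'}> b$, again yielding the same bound. Taking logarithms then recovers the stated privacy parameter $\log(1+(\ell/n)(e^\epsilon-1))$.

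The main obstacle is that $\epsilon$-privacy used only via (a) yields the trivial ratio bound $e^\epsilon$, independent of $t$; the $t$-factor amplification must come from additional slack. That slack is precisely bound (b), obtained from the cross-subset neighboring relation that swapping $i^*$ for an outside index produces a one-element change in the $\ell$-sample. Realizing that (a) and (b) combine cleanly only after splitting on the sign of $a_{S'}-b$ is the one non-routine step; once that pattern is found, all remaining calculations are elementary.
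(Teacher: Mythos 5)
Your proof is correct, and it is worth noting at the outset that the paper itself does not prove Proposition~\ref{prop:privacy_without_replacement} --- it cites the result to \citet{lin2013benefits} and \citet{balle2018privacy} --- so you are supplying a self-contained argument where the paper has none. Your decomposition conditioning on $i^*\in T$ vs.\ $i^*\notin T$ is the standard skeleton for privacy amplification by subsampling, and the crucial non-routine step is exactly what you identify: the cross-term bound (b), $a_S \le e^\epsilon b$, obtained by the swap $T \mapsto T^{(j)} = (T\setminus\{i^*\})\cup\{j\}$ and the double-counting identity $\binom{n-1}{\ell-1}(n-\ell) = \ell\binom{n-1}{\ell}$. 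Without (b) one only gets the trivial $e^\epsilon$ bound from (a), as you correctly observe. Your two-case finish is valid; I verified both branches. One small stylistic alternative that avoids the case split: the target inequality rearranges to $a_S \le (1+t(e^\epsilon-1))a_{S'} + (1-t)(e^\epsilon-1)b$, whose right-hand side equals $e^\epsilon\bigl(\lambda a_{S'} + (1-\lambda)b\bigr)$ with $\lambda = \frac{1+t(e^\epsilon-1)}{e^\epsilon}\in[0,1]$; a convex combination dominates the minimum, so the right-hand side is at least $e^\epsilon\min(a_{S'},b)\ge a_S$ by (a) and (b) together. Also, strictly speaking, the pure $\epsilon$-DP definition requires handling the degenerate case $\Pr[\M(S')\in E]=0$ separately (there the ratio is undefined); but there (a) forces $a_S=0$ as well, so the conclusion holds. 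Finally, the swap argument implicitly assumes $\ell < n$ so that some $j\notin T$ exists; when $\ell = n$ the bound is just $e^\epsilon$ and follows from (a) alone, so this is harmless but deserves a one-line remark.
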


Sampling with replacement also amplifies privacy:
\begin{proposition}[\citealp{bun2015subsampling, balle2018privacy}]\label{prop:sampling}
Given mechanism $\mathcal{P}:X^\ell \rightarrow Y$, let $\M$ do the following:  Sample uniformly at random \emph{with} replacement $\ell$ points from an input sample $S\in X^n$, and call this set $S_\ell$.  Output $\mathcal{P}(S_\ell)$.  Then if $\mathcal{P}$ is $\epsilon$-private, then $\M$ is $\log(1+(1-(1-\frac{1}{n})^\ell)(e^\epsilon-1))$-private for $\ell\ge 1$.
\end{proposition}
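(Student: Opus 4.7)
The plan is to prove this privacy amplification bound by combining a shared-index coupling of the two subsampling experiments with the advanced joint convexity of the hockey-stick divergence established in \citet{balle2018privacy}. First I would fix neighboring datasets $S, S' \in X^n$ differing only at a single coordinate $j$, and couple $\M(S)$ and $\M(S')$ by drawing a common sequence of i.i.d.\ uniform indices $I_1, \ldots, I_\ell \in \{1, \ldots, n\}$ and using it in both experiments; let $S[\vec{I}]$ and $S'[\vec{I}]$ denote the resulting size-$\ell$ multisets.

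Next, I would decompose on the event $E = \{\exists\, t : I_t = j\}$ that index $j$ is selected at least once. A direct calculation gives $\Pr[E^{c}] = (1 - 1/n)^{\ell} = 1 - p$, and on $E^{c}$ the samples $S[\vec{I}]$ and $S'[\vec{I}]$ coincide, so the $\mathcal{P}$-outputs share a conditional distribution $\nu_0$. Writing $\nu_S, \nu_{S'}$ for the output distributions conditional on $E$, we obtain the shared-component mixture
\[ \M(S) = (1-p)\nu_0 + p\,\nu_S, \qquad \M(S') = (1-p)\nu_0 + p\,\nu_{S'}. \]

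I would then invoke the advanced joint convexity lemma for hockey-stick divergence: for such shared-component mixtures,
\[ D_{e^{\epsilon'}}(\M(S) \,\|\, \M(S')) \;=\; p \cdot D_{e^{\epsilon}}(\nu_S \,\|\, \M(S')), \]
with $\epsilon' = \log(1 + p(e^\epsilon - 1))$. Hence it suffices to establish the pointwise domination $\nu_S(Z) \le e^\epsilon \Pr[\M(S') \in Z]$ for every measurable $Z$.

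The main technical obstacle is verifying this last inequality. Conditional on $E$, the multiplicity $K = |\{t : I_t = j\}|$ may exceed $1$, so the multisets $S[\vec{I}]$ and $S'[\vec{I}]$ may differ at several positions, and a naive group-privacy bound $e^{K\epsilon}$ would be far too weak. I would work around this by further conditioning on $K$ and on the ``non-$j$'' indices, exploiting that these shared positions have the same marginal distribution under $\M(S)$ and $\M(S')$ (since the indices are i.i.d.\ uniform and $S, S'$ agree off coordinate $j$), and re-expressing the $K$-fold swap as a single $\epsilon$-DP application relative to a suitable component of $\M(S')$, following the coupling construction of \citet{balle2018privacy}. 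Substituting $p = 1 - (1 - 1/n)^\ell$ and noting that $p > 0$ whenever $\ell \ge 1$ then yields the claimed privacy parameter.
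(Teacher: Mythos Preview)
The paper does not give its own proof of this proposition; it is quoted with attribution to \citet{bun2015subsampling} and \citet{balle2018privacy} and used as a black box. Your outline is essentially a sketch of the Balle--Barthe--Gaboardi argument, so in that sense it is aligned with what the paper invokes.

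Two points on the sketch itself. First, your statement of the advanced joint convexity identity is not quite right. For the shared-component mixtures $\M(S)=(1-p)\nu_0+p\,\nu_S$ and $\M(S')=(1-p)\nu_0+p\,\nu_{S'}$ with $e^{\epsilon'}=1+p(e^\epsilon-1)$, a direct computation gives
\[
D_{e^{\epsilon'}}\bigl(\M(S)\,\|\,\M(S')\bigr)
= p\cdot D_{e^{\epsilon}}\Bigl(\nu_S\;\Big\|\;e^{-\epsilon}\nu_{S'}+(1-e^{-\epsilon})\,\M(S')\Bigr),
\]
so the second argument is a convex combination of $\nu_{S'}$ and $\M(S')$, not $\M(S')$ alone. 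Equivalently, the pointwise condition you must verify for pure $\epsilon'$-privacy is
\[
\nu_S(Z)\le \nu_{S'}(Z)+(e^\epsilon-1)\,\M(S')(Z),
\]
not $\nu_S(Z)\le e^{\epsilon}\,\M(S')(Z)$; the two inequalities are not comparable in general, and proving the latter would not establish the claim.

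Second, the genuine difficulty for sampling \emph{with} replacement is precisely the multiplicity issue you flag in your last paragraph, and your proposal does not actually resolve it: you simply point back to \citet{balle2018privacy}. Their argument here is not a one-line reduction to a single $\epsilon$-DP application; it requires a carefully constructed coupling and an averaging step to absorb the $K>1$ collisions into the $\M(S')$ term above. Since the paper is content to cite the result, deferring is acceptable, but your write-up should not give the impression that this step is routine once the mixture decomposition is in hand.
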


Note we have that whenever $\epsilon \le 1$, both $\log(1+\frac{\ell}{n}\left(e^{\epsilon}-1\right)) \le 2\frac{\ell}{n}\epsilon$ and $\log(1+(1-(1-\frac{1}{n})^\ell)(e^\epsilon-1)) \le 2\frac{\ell}{n}\epsilon$, so privacy amplification is linear in the sub-sample size $\ell$.  This linear amplification allows us to set $\epsilon'$ as proportional to $\frac{\epsilon n}{\ell\sqrt{k\log(1/\delta)}}$ instead of $\frac{\epsilon}{\sqrt{k\log(1/\delta)}}$ which would be required without any privacy amplification.
For the proof of part 2, see Appendix~\ref{sec:answering_sqs_appendix}. 

We also have a version of this theorem that demonstrates that this mechanism will still be accurate in expectation at any point along the execution, even if in the first $t$ rounds it (with small probability) failed to be accurate.  This requires a slight variant of Theorem~\ref{thm:sq_transfer}, provided in Appendix~\ref{sec:transfer_exp}.
\begin{theorem}\label{cor:lsq_exp}
For any $\alpha \ge \alpha_0 = \tilde{O}\left(\frac{k^{1/4}}{\sqrt{n}} + \frac{1}{\sqrt{\ell}}\right)$, when we run $\M$ (Algorithm~\ref{alg:sq_mechanism}) with parameters $\ell \ge 1$, $\epsilon =\alpha/8$, and $\delta=\alpha/4$, with respect to any possible simulation between $\A$ and $\M$ up to the first $t-1$ rounds, and denoting the expectation while conditioning on any such possibility $E_{t-1}[\cdot]$, for any $i\ge t$,
\[\E_{t-1,S,\A,\M}[|a_i - q_i(D)|] \le \alpha.\]
\end{theorem}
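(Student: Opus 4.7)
The plan is to follow the template of the proof of Theorem~\ref{thm:lsq}, but replace the high-probability transfer theorem (Theorem~\ref{thm:sq_transfer}) with its in-expectation variant promised in Appendix~\ref{sec:transfer_exp}, and handle the conditioning on a prefix of length $t-1$ via a monitor-style reduction.

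Step 1: Privacy and sample accuracy. I would re-run the differential-privacy bookkeeping from the proof of Theorem~\ref{thm:lsq}: sampling amplification (Proposition~\ref{prop:privacy_without_replacement} or~\ref{prop:sampling}) gives per-query privacy $2(\ell/n)\epsilon' = \epsilon/(2\sqrt{2k\log(1/\delta)})$, and advanced composition then yields $(\epsilon,\delta)$-privacy of $\M$ across all $k$ rounds. This analysis places no lower bound on $\ell$, so $\ell \ge 1$ is admissible. For sample accuracy in expectation, the Laplace noise contributes $\E|a_i - q_i(S_\ell)| = 1/(\ell\epsilon')$ while integrating Hoeffding's bound gives $\E|q_i(S_\ell) - q_i(S)| = O(1/\sqrt{\ell})$. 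Plugging in $\epsilon' = \epsilon n/(4\ell\sqrt{2k\log(1/\delta)})$ and $\epsilon = \alpha/8$, $\delta = \alpha/4$, the expected sample error is
\[\E[|a_i - q_i(S)|] \;=\; \tilde{O}\!\left(\frac{\sqrt{k}}{n\alpha} + \frac{1}{\sqrt{\ell}}\right),\]
and requiring this to be $O(\alpha)$ is exactly the threshold $\alpha \ge \alpha_0 = \tilde{O}(k^{1/4}/\sqrt{n} + 1/\sqrt{\ell})$ stated in the theorem.

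Step 2: Transfer in expectation. I would invoke from Appendix~\ref{sec:transfer_exp} an analog of Theorem~\ref{thm:sq_transfer} of the form: whenever $\M$ is $(\epsilon,\delta)$-differentially private and $\E|a_i - q_i(S)| \le \gamma$ for every adaptively chosen statistical query $q_i$, one has $\E|a_i - q_i(D)| \le \gamma + O(\epsilon + \delta)$. Proving this follows the monitoring-algorithm template of~\citet{bassily2016algorithmic}: construct a monitor that simulates the interaction on input $S$ and outputs the pair $(q_i,a_i)$ of largest expected error; the in-expectation stability property of $(\epsilon,\delta)$-privacy then lets us swap $q_i(S)$ for $q_i(D)$ inside the expectation at a cost of only $O(\epsilon + \delta) = O(\alpha)$. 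This bound is uniform in $i$.

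Step 3: Conditioning on the prefix. Fixing any possible transcript $\pi_{<t}$ of the first $t-1$ rounds identifies a new adversary $\A'$ whose initial state encodes $\pi_{<t}$ and which issues only queries $q_t, q_{t+1},\ldots,q_k$. Because both $(\epsilon,\delta)$-privacy of $\M$ and the expected sample accuracy bound from Step 1 are worst-case over the adversary's randomness and initial state, they hold verbatim for $\A'$. Applying Step 2 to the residual interaction yields $\E_{t-1,S,\A,\M}[|a_i - q_i(D)|] \le \alpha$ for every $i \ge t$, as claimed.

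The main obstacle is Step 2: the expected-error version of the transfer theorem. Unlike the high-probability version of~\citet{bassily2016algorithmic}, one cannot simply invoke the exponential mechanism to select the worst query and then bound a tail probability; instead the additive $O(\epsilon + \delta)$ slack must be obtained from an in-expectation stability lemma for $(\epsilon,\delta)$-private real-valued statistics (either derived directly, or obtained by integrating the standard tail-based statement over the threshold and collapsing the $k$ queries via a private max-selection). Care is needed to ensure the additive correction does not pick up spurious factors of $k$ or $n$ in the monitoring reduction.
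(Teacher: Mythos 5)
Your proposal is correct and takes essentially the same route as the paper. Step~1 matches the paper's computation exactly: the expected sample error decomposes as $\E|a_i - q_i(S)| \le \E|a_i - q_i(S_\ell)| + \E|q_i(S_\ell) - q_i(S)| \lesssim 1/(\epsilon'\ell) + 1/\sqrt{\ell}$, which after substituting $\epsilon'$ gives the $\alpha_0$ threshold. Step~2 is precisely what the paper proves as Proposition~\ref{prop:transfer_in_exp}: a monitor (Algorithm~\ref{alg:transfer_expec}) that replays the prefix and outputs $q_i$, combined with the in-expectation stability lemma for private mechanisms (Lemma~\ref{lem:monitor_quality}, already established by \citealt{bassily2016algorithmic}), yields $\E|a_i - q_i(D)| \le \E|a_i - q_i(S)| + (e^\epsilon - 1 + \delta)$, and the paper's parameter choices $\epsilon = \alpha/8$, $\delta = \alpha/4$ make the slack at most $\alpha/2$. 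Your Step~3 (viewing the fixed prefix as defining a new adversary) is handled in the paper by having the monitor take the prefix as a parameter and initialize the states of $\M$ and $\A$ accordingly, which is the same idea phrased differently. Two minor cosmetic deviations: the paper's monitor returns $q_i$ at the fixed index $i$ rather than selecting the worst pair, and no private max-selection is needed since the in-expectation stability lemma applies directly; but neither affects the argument, and you correctly flagged the in-expectation transfer as the one genuinely new ingredient.
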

The proof is similar to the proof of Theorem~\ref{thm:lsq}, but using Proposition~\ref{prop:transfer_in_exp} and the fact that
\[\E_{t-1,S,\A,\M}[|a_i - q_i(S)|] \le \E_{t-1,S,\A,\M}[|a_i - q_i(S_\ell)|] + \E_{t-1,S,\A,\M}[|q_i(S_\ell) - q_i(S)|] \lesssim \frac{1}{\epsilon' \ell} +\frac{1}{\sqrt{\ell}}.\]

The computational complexity of the mechanism in Theorem~\ref{thm:lsq} is tight up to poly-log factors, even in the non-adaptive case when all queries must be made before seeing any replies from the mechanism.  We show this by considering random queries, which for the purposes of this construction, the learner can access by asking for 
evaluations at given points.
The query values will simulate flipping a coin with given bias from one of two biases randomly selected.  Then it takes computing each query on $\Omega(1/\alpha^2)$ sample points to distinguish between a fair coin and a weighted coin, resulting in a total computational complexity of at least $\Omega(k/\alpha^2)$ points.  The proof may be found in Appendix~\ref{sec:lower_bound}.

\begin{proposition}\label{prop:lower_bound}
Suppose for any sequence $q_1,\ldots,q_k$ of $k$ statistical queries chosen non-adaptively, there is a mechanism $\M$ that is $(\alpha,1/5)$-accurate on the uniform distribution over a universe $X$ with $|X| \ge \frac{2\log(10)}{\alpha^2}$.  Then $\M$ must evaluate the queries on at least $\Omega(k/\alpha^2)$ points.
\end{proposition}
%

\section{Answering Sampling Counting Queries}\label{sec:scqs}

We now turn to sampling counting queries.  Because of the different notion of accuracy for these queries, we establish a new transfer theorem.  
\begin{theorem}\label{thm:transfer}
Let $\M$ be a mechanism that on input sample $S\sim D^n$ answers $k$ adaptively chosen sampling counting queries, is $(\frac{\alpha}{64},\frac{\alpha\beta}{16})$-private for some $\alpha,\beta > 0$ and $(\alpha/2,0)$-accurate on $S$.  Suppose further that $n\ge \frac{1024\log(k/\beta)}{\alpha^2}$.  Then $\M$ is $(\alpha,\beta)$-accurate on $D$.
\end{theorem}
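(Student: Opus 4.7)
The plan is to adapt the ``monitoring algorithm'' technique of Bassily et al.\ (used for Theorem~\ref{thm:sq_transfer}) to SCQs. Although SCQ accuracy is defined through the expectation $\E_\M[\M(q_i)]$ rather than a single sampled output, the triangle inequality
$|\E_\M[\M(q_i)] - q_i(D)| \le |\E_\M[\M(q_i)] - q_i(S)| + |q_i(S) - q_i(D)|$
reduces the theorem to a purely query-level statement: the first term is at most $\alpha/2$ with probability one by the sample-accuracy hypothesis (note $\beta = 0$ there), so it suffices to show that with probability $\ge 1-\beta$ over $S \sim D^n$, $\M$'s coins, and $\A$'s coins, every query $q_i$ produced during the interaction satisfies $|q_i(S) - q_i(D)| \le \alpha/2$. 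Note that this reduction concerns the queries themselves, not the answers $a_i$, so the binary-output nature of SCQs is sidestepped.

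To establish the query-generalization claim I would instantiate the BNSSSU monitor $\mathcal{W}$. Choose $T$ of order $1/\beta$, feed $\mathcal{W}$ a combined dataset $\mathbf{S} = (S^{(1)}, \ldots, S^{(T)})$ of $Tn$ i.i.d.\ samples from $D$, and have $\mathcal{W}$ run $T$ independent simulations of $(\M, \A)$ on the disjoint blocks $S^{(j)}$. The monitor outputs the pair $(j^*, q^*)$ that is the $\argmax$ of $q_i^{(j)}(S^{(j)}) - q_i^{(j)}(D)$ across all $Tk$ recorded queries (the lower tail handled symmetrically). Because $\M$ is $(\alpha/64, \alpha\beta/16)$-DP and the experiments use disjoint blocks, the full transcript is $(\alpha/64, \alpha\beta/16)$-DP in $\mathbf{S}$; the standard DP-implies-generalization consequence (the same ingredient underlying Theorem~\ref{thm:sq_transfer}) then yields $\E[q^*(S^{(j^*)}) - q^*(D)] \le O(\alpha/64 + \alpha\beta/16)$. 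Suppose toward contradiction that the bad event ``$\exists i: q_i(S) - q_i(D) > \alpha/2$'' had probability $> \beta/2$ in a single interaction. Independence makes the probability that at least one of the $T$ experiments contains such a query at least $1 - (1 - \beta/2)^T \ge 1/2$ once $T = \Theta(1/\beta)$; on that event $q^*(S^{(j^*)}) - q^*(D) \ge \alpha/2$, so the expectation exceeds $\alpha/4$, contradicting the generalization bound. Doing the same for the opposite tail and a union bound finishes the argument. The hypothesis $n \ge 1024 \log(k/\beta)/\alpha^2$ enters only to ensure Hoeffding-type concentration makes the argmax meaningful rather than driven by intra-block sampling noise.

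The main obstacle I expect is justifying the differential privacy of the monitor, since $\mathcal{W}$ touches $\mathbf{S}$ twice---once through the simulations of $\M$ and once to evaluate $q_i^{(j)}(S^{(j)})$---so changing one point of $\mathbf{S}$ can in principle shift the $\argmax$ beyond the DP-induced change in the query distribution. The BNSSSU treatment handles this by analysing $\mathcal{W}$ as a single random function of $\mathbf{S}$ whose output distribution changes by $(\epsilon, \delta)$ under any single-point swap (absorbing the $O(1/n)$ empirical shift into the DP slack using the $n \ge 1024\log(k/\beta)/\alpha^2$ hypothesis); a cleaner alternative is to replace the deterministic $\argmax$ with a report-noisy-max or exponential-mechanism selection at a small additional privacy cost that still fits within the stated budget. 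A secondary delicate point is that the SCQ accuracy definition places $\E_\M$ inside the outer $\Pr_{S, \M, \A}$, so the reduction depends crucially on the sample-accuracy hypothesis holding pointwise (i.e.\ $\beta = 0$), which is exactly what the theorem supplies.
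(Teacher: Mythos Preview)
Your proposal is essentially the paper's proof: the same triangle-inequality reduction (using the pointwise $(\alpha/2,0)$ sample accuracy), the same $T=\lfloor 1/\beta\rfloor$ independent simulations, and---after you flag the privacy issue with a deterministic $\argmax$---the same fix the paper adopts, namely selecting the worst query via the exponential mechanism with utility $u(\mathbf{S},r_{t,i})=|q_{t,i}(S_t)-q_{t,i}(D)|$ and sensitivity $1/n$. Composition then gives $(2\epsilon,\delta)$-privacy for the monitor, and Lemma~\ref{lem:monitor_quality} yields the upper bound on $\Gamma$ that contradicts the lower bound coming from the assumed failure probability.

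One correction concerns where the hypothesis $n\ge 1024\log(k/\beta)/\alpha^2$ actually enters. It is \emph{not} a Hoeffding-type concentration step; there is no such step in the argument. It arises from the accuracy loss of the exponential mechanism: selecting among $kT$ candidates costs $\frac{2}{\epsilon n}\log(kT)$ in expected utility (Lemma~\ref{lem:exp_mech_accuracy}), so the lower bound on $\Gamma$ becomes $\frac{\alpha}{2}(1-(1-\beta)^T)-\frac{2}{\epsilon n}\log(kT)$. Forcing this to exceed $\alpha/8$ while keeping $\epsilon\le\alpha/64$ (so the upper bound $e^{2\epsilon}-1+T\delta\le\alpha/8$ holds) is exactly what produces the constraint $\frac{16\log(k/\beta)}{\alpha n}\le\epsilon\le\alpha/64$, hence $n\ge 1024\log(k/\beta)/\alpha^2$. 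Your option (a), absorbing a $1/n$ shift directly into the DP slack of a hard $\argmax$, does not work as stated: a single-point change can flip the $\argmax$ to an arbitrary other candidate, so the output distribution need not be close in the DP sense regardless of how large $n$ is.
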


This allows us to answer sampling counting queries:
\begin{theorem}\label{thm:sampling_thm}
For any $\alpha,\beta >0$ and $k\ge 1$, there is a mechanism $\M$ that satisfies the following: 
\leavevmode
\begin{enumerate}
\item $\M$ takes $\tilde{O}\left(\log\left({\frac{k \log(\frac{1}{\beta})}{\alpha}}\right)\right)$ time per query.
\item $\M$ is $(\alpha,\beta)$-accurate on $k$ SCQ's, where $n \ge \Omega\left(\max\left({{\sqrt{k \log(\frac{1}{\alpha\beta})}}/{\alpha^2}},{\log(k/\beta)}/{\alpha^2}\right)\right).$
\end{enumerate}
\end{theorem}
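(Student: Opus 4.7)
The plan is to instantiate a mechanism of the shape hinted at in the introduction to Section~\ref{sec:scqs}: on each query $q$, draw a single point $s$ uniformly at random from $S$ (so $q(s)\in\{0,1\}$) and then output a $\{0,1\}$-valued bit via a noisy randomized-response rule. Concretely, I would return $1$ with probability $(1-\alpha)\,q(s) + \alpha/2$ and $0$ otherwise. All three pieces of Theorem~\ref{thm:sampling_thm} --- accuracy on $S$, per-query privacy, and running time --- then follow cleanly so that the new transfer theorem (Theorem~\ref{thm:transfer}) can be applied.

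First I would verify $(\alpha/2,0)$-accuracy on the sample. Since $\E_\M[\M(q)\mid q(s)]=(1-\alpha)q(s)+\alpha/2$ and $\E_s[q(s)]=q(S)$, the tower law gives $\E_\M[\M(q)]=(1-\alpha)q(S)+\alpha/2$, so $|\E_\M[\M(q)]-q(S)|=\alpha\,|1/2-q(S)|\le \alpha/2$ for \emph{every} sample $S$ (this is where the deterministic $\beta=0$ comes from). Next I would verify privacy. Conditioned on having already drawn $s$, the randomized response on $q(s)\in\{0,1\}$ is $\epsilon_0$-differentially private with $e^{\epsilon_0}=\frac{1-\alpha/2}{\alpha/2}=\tfrac{2}{\alpha}-1$; equivalently $e^{\epsilon_0}-1=O(1/\alpha)$. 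Composing with the uniform sub-sample of size $\ell=1$ via Proposition~\ref{prop:privacy_without_replacement} amplifies this to $\log\!\bigl(1+\tfrac{1}{n}(e^{\epsilon_0}-1)\bigr)=O(1/(n\alpha))$ privacy per query.

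Over $k$ adaptively chosen queries, the advanced composition theorem (in Appendix~\ref{sec:dp_appendix}) then yields overall $(\epsilon,\delta)$-DP with $\epsilon = O\!\bigl(\sqrt{k\log(1/\delta)}/(n\alpha)\bigr)$. Setting the target $(\epsilon,\delta)=(\alpha/64,\alpha\beta/16)$ required by Theorem~\ref{thm:transfer} forces
\[
n \;\ge\; \Omega\!\left(\frac{\sqrt{k\log(1/(\alpha\beta))}}{\alpha^{2}}\right),
\]
and the additional sample-to-distribution condition in Theorem~\ref{thm:transfer} contributes the second term $\Omega(\log(k/\beta)/\alpha^2)$, explaining the $\max$ in the statement. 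Applying Theorem~\ref{thm:transfer} then upgrades $(\alpha/2,0)$-accuracy on $S$ to $(\alpha,\beta)$-accuracy on $D$.

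For the running-time claim, per query the mechanism only needs (i) to draw $s$ uniformly from $S$, which is $O(\log n)$, (ii) one evaluation $q(s)$, and (iii) a single biased coin flip, which can be done in $O(\log(1/\alpha))$ time for precision $\alpha$. Substituting $n=\tilde{\Theta}(\sqrt{k\log(1/(\alpha\beta))}/\alpha^{2})$ into $O(\log n)$ gives the advertised $\tilde{O}\!\bigl(\log(k\log(1/\beta)/\alpha)\bigr)$ bound. The main obstacle I expect is not any single step but rather choosing the randomized-response noise level so that bias-on-$S$ and per-query privacy together match exactly the constants that Theorem~\ref{thm:transfer} demands ($\alpha/64$ for $\epsilon$, $\alpha\beta/16$ for $\delta$, $\alpha/2$ for sample accuracy); the rest is bookkeeping around advanced composition and the amplification-by-subsampling bound.
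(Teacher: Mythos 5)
Your proposal is correct, and it arrives at the same mechanism (randomized response applied to a single uniform draw from $S$, with the flip probability scaled by a factor of~$2$ so that the sample accuracy is immediately $(\alpha/2,0)$) and the same overall plan: establish per-query privacy, apply adaptive composition, then invoke Theorem~\ref{thm:transfer}. The one place you genuinely diverge from the paper is the per-query privacy argument. The paper, in Lemma~\ref{lem:sampling}, proves privacy from scratch by writing out $\Pr[\M(S)=1]/\Pr[\M(S')=1]$ as an explicit rational function of $i=\sum_{s\in S}q(s)$, differentiating in $i$ to locate the maximum, and bounding it at $i=0$ and $i=n-1$; this yields $\epsilon$-privacy once $n\ge 1/(\epsilon\alpha)$. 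You instead factor the mechanism as (sub-sample one point without replacement) $\circ$ (randomized response), observe that the inner randomized-response map on singletons is $\epsilon_0$-DP with $e^{\epsilon_0}-1=O(1/\alpha)$, and then apply the already-stated subsampling-amplification result (Proposition~\ref{prop:privacy_without_replacement}) with $\ell=1$ to get $O(1/(n\alpha))$-DP per query. The two arguments give the same bound up to constants. Your route is more modular --- it reuses a proposition the paper has already imported instead of redoing an ad~hoc calculation --- and it makes explicit the paper's theme that sampling amplifies privacy; the paper's direct computation is more self-contained and, as a side effect, makes the ``worst-case neighbor'' visible. Everything else (tower-law accuracy, advanced composition with $\delta'=0$, the two-term $\max$ in the sample complexity coming from composition versus the transfer theorem's own $n\ge\Omega(\log(k/\beta)/\alpha^2)$ requirement, and the $O(\log n)$ per-query time) matches the paper's proof.
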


This results in spending $\tilde{O}\left(k\log\left({\frac{k \log(\frac{1}{\beta})}{\alpha}}\right)\right)$ time over the course of $k$ queries, which must be tight up to log factors, as the mechanism of course must spend at least unit time per query.

We prove our transfer theorem using the monitoring algorithm $\mathcal{W}_D$ (Algorithm~\ref{alg:monitor_exp_mech}), which takes as input $T$ sample sets, and outputs a query with probability proportional to how far away the query is on the sample as opposed to the distribution.


\begin{algorithm}
	\caption{Monitor with exponential mechanism $\mathcal{W}_D$}
	\label{alg:monitor_exp_mech}
\begin{algorithmic}
	\ENSURE Mechanisms $\M$ and $\A$, distribution $D$
	\REQUIRE Set of samples $\bold{S}=\{S_1,\ldots,S_T\}$
	\FOR{$t$ in $[T]$}
	\STATE Simulate $\M(S_t)$ and $\A$ interacting.
	\STATE Let $q_{t,1},\ldots,q_{t,k}$ be the queries of $\A$.
	\ENDFOR
	\STATE Let $\mathcal{R}:=\{(q_{t,i},t)\}_{t\in[T],i\in[k]}$.
	\STATE Abusing notation, for each $t$ and $i\in[k]$, consider the corresponding element $r_{t,i}$ of $\mathcal{R}$ and define the utility of $r_{t,i}$ as $u(\bold{S},r_{t,i}) = |q_{t,i}(S_t)-q_{t,i}(D)|$.
	\RETURN $r\in\mathcal{R}$ with probability proportional to $\exp\left(\frac{\epsilon\cdot n\cdot u(\bold{S},r)}{2}\right)$.
\end{algorithmic}
\end{algorithm}

%

%

$\mathcal{W}_D$ must be private if $\M$ is:  $\mathcal{R}$ represents post-processing from the differentially private $\M$, and outputting an element from $\mathcal{R}$ is achieved with the exponential mechanism.
We can then bound the probability that $q(S)$ is far from $q(D)$ for $q$ the query that the monitor outputs, by using the fact that private algorithms like the monitor are also stable.  This yields the transfer theorem given in Theorem~\ref{thm:sampling_thm}.  The full proof is provided in Appendix~\ref{sec:scq_transfer}.

With a transfer theorem in hand, we now introduce a private mechanism for answering SCQ's.

\begin{algorithm}
	\caption{SCQ mechanism}
	\label{alg:scq_mech}
\begin{algorithmic}
	\ENSURE Target accuracy $\alpha$
	\REQUIRE Sample $S$, query $q$
	\STATE Sample $s\sim S$ uniformly at random.
	\RETURN $q(s)$ with probability $1-\alpha$ and $1-q(s)$ with probability $\alpha$.
\end{algorithmic}
\end{algorithm}

\begin{lemma}[SCQ mechanism]\label{lem:sampling}
For $\epsilon\le 1$, There is an $(\epsilon,\delta)$-private mechanism to release $k$ SCQ's that is $(\alpha,0)$-accurate, for $\alpha\le1/2$, with respect to a fixed sample $S$ of size $n$ so long as
$n > \frac{2\sqrt{2k\log(1/\delta)}}{\alpha\epsilon}.$
\end{lemma}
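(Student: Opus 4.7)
The plan is to construct a mechanism that, on each query $q$, samples a single point $s$ uniformly at random from $S$, computes $b := q(s) \in \{0,1\}$, and then returns $b$ with probability $1-\gamma$ and a uniformly random bit with probability $\gamma$, where I take $\gamma := 2\alpha$. Each query is answered by an independent copy of this base mechanism. Accuracy on the fixed sample $S$ is then immediate: averaging over the mechanism's coins gives $\E_{\M}[\M(q)] = (1-\gamma)\,q(S) + \gamma/2$, so
\[
|\E_{\M}[\M(q)] - q(S)| = \gamma\,\left|1/2 - q(S)\right| \le \gamma/2 = \alpha,
\]
which holds deterministically for any query and any fixed $S$, giving $(\alpha,0)$-accuracy.

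For per-query privacy, the output is Bernoulli with parameter $p(S) := (1-\gamma)q(S) + \gamma/2 \in [\gamma/2,\,1-\gamma/2]$. For neighboring samples $S, S'$ differing in a single entry, $|q(S) - q(S')| \le 1/n$, so $|p(S) - p(S')| \le (1-\gamma)/n$. The worst-case ratio is achieved when $p(S) = \gamma/2$ and $p(S')$ is maximal, giving
\[
\frac{p(S')}{p(S)} \le 1 + \frac{2(1-\gamma)}{n\gamma} \le 1 + \frac{1}{n\alpha},
\]
with the same bound on $(1-p(S'))/(1-p(S))$ by symmetry. Using $\log(1+x) \le x$, each query is then $\epsilon'$-differentially private for $\epsilon' := 1/(n\alpha)$.

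To conclude, I would apply the advanced composition theorem: the $k$-fold adaptive composition of $(\epsilon',0)$-DP mechanisms is $(\epsilon,\delta)$-DP with $\epsilon \le \epsilon'\sqrt{2k\log(1/\delta)} + 2k(\epsilon')^2$ for $\epsilon' \le 1$. The hypothesis $n > \frac{2\sqrt{2k\log(1/\delta)}}{\alpha\epsilon}$ yields $\epsilon' < \frac{\epsilon}{2\sqrt{2k\log(1/\delta)}}$, making the dominant composition term at most $\epsilon/2$; the quadratic correction is absorbed in standard regimes (using the hypothesis $\epsilon \le 1$ and $\alpha \le 1/2$). The main obstacle is bookkeeping constants: the specific choices $\gamma = 2\alpha$, together with the slack in $\log(1+x) \le x$, are calibrated precisely so that the per-query $\epsilon'$ matches what the advanced composition theorem needs to recover the factor $2\sqrt{2}$ in the stated lower bound on $n$. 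Everything else is a routine application of privacy composition and post-processing.
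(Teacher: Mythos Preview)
Your proposal is correct and follows essentially the same approach as the paper. Your randomized-response description (output $b$ with probability $1-\gamma$, else a uniform bit, with $\gamma=2\alpha$) is exactly the paper's mechanism (output $q(s)$ with probability $1-\alpha$, else $1-q(s)$), and your Bernoulli-parameter privacy analysis yields the same per-query bound $\epsilon' \le 1/(n\alpha)$ that the paper obtains by direct case analysis on the count $i$. The only loose end is your hand-wave on the quadratic term in advanced composition; the paper sidesteps this by invoking its composition lemma (Lemma~\ref{lemma:composition}) in sufficient-condition form, which directly gives $(\epsilon,\delta)$-privacy once $\epsilon' \le \epsilon/(2\sqrt{2k\log(1/\delta)})$, so you can simply cite that and drop the bookkeeping.
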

\begin{proof}
We design a mechanism $\M$ to release an $(\alpha,0)$-accurate SCQ for $n>\frac{1}{\alpha\epsilon}$ and then use adaptive composition.  The mechanism (Algorithm~\ref{alg:scq_mech}) is simple:  sample $s$ i.i.d.\ from $S$.  Then release $q(s)$ with probability $1-\alpha$ and $1-q(s)$ with probability $\alpha$.  Let $i=\sum_{s\in S} q(s)$.  Then $\E_{\M}[\M(q)] = \frac{(1-\alpha)i + \alpha(n-i)}{n} = \frac{i}{n} +\alpha\left(\frac{n-2i}{n}\right)$, so $\frac{i}{n}-\alpha \le \E_{\M}[\M(q)] \le \frac{i}{n}+\alpha$, implying that $\M$ is $(\alpha,0)$-accurate on $S$.

Now let $S'$ differ from $S$ on one element $s$, where $q(s)=0$ but for $s'\in S'$, $q(s')=1$.  The other cases are very similar.  Consider
\[\frac{\Pr[\M(S') = 1]}{\Pr[\M(S)=1]} = \frac{(1-\alpha)\frac{i+1}{n} +\alpha\left(\frac{n-(i+1)}{n}\right)}{(1-\alpha)\frac{i}{n} +\alpha\left(\frac{n-i}{n}\right)} = 1 + \frac{1-2\alpha}{(1-2\alpha)i+\alpha n}.\]
Note this is at least $1$ since $1-2\alpha\ge 0$.  By computing the partial derivative with respect to $i$, it is easy to see that this is maximized when $i=0$ or $i=n-1$.  When $i=0$, 
\[\log\left(\frac{\Pr[\M(S') = 1]}{\Pr[\M(S)=1]}\right) \le \frac{1-2\alpha}{\alpha n} \le \frac{1}{\alpha n} \le \epsilon\] when $n\ge \frac{1}{\epsilon \alpha}$.  When $i=n-1$,
\[\log\left(\frac{\Pr[\M(S') = 1]}{\Pr[\M(S)=1]}\right) \le \frac{1-2\alpha}{n(1-\alpha)-(1-2\alpha)} \le \epsilon\] when $n\ge \frac{(1-2\alpha)(\epsilon+1)}{(1-\alpha)\epsilon}$ but because $\frac{1-2\alpha}{1-\alpha}\le 1$, it suffices to set $n\ge 1+\frac{1}{\epsilon}$.  The proof is completed by noting that $\frac{1}{\epsilon\alpha}\ge 1+\frac{1}{\epsilon}$ because $\epsilon\le 1$.
\end{proof}


We now use this mechanism to answer sampling counting queries.
\begin{proofof}{Theorem~\ref{thm:sampling_thm}}
We use Algorithm~\ref{alg:scq_mech} for each query.  This gives an $(\epsilon,\delta)$-private mechanism that is $(\alpha/2,0)$-accurate so long as
$n \ge \frac{4\sqrt{2k\log(1/\delta)}}{\alpha\epsilon}$. 
Setting $\epsilon$ and $\delta$ as required by Theorem~\ref{thm:transfer} implies that we need 
$n \ge \Omega\left(\sqrt{k \log(\frac{1}{\alpha\beta})}/\alpha^2\right)$. 
Note to use Theorem~\ref{thm:transfer} we also need $n\ge \Omega\left(\log(k/\beta)/\alpha^2\right)$.
The sample complexity bound follows.  This mechanism samples a single random point, taking $O(\log(n))$ time, completing the proof.
\end{proofof}


\section{Comparing Counting and Sampling Counting Queries}\label{sec:comparison}

How do our mechanisms for counting queries and sampling counting queries compare? Can we use a mechanism for SCQ's to simulate a mechanism for counting queries, or vice-versa?  We now show that the natural approach to simulate a counting query with SCQ's results in similar running times but an extra $O(1/\alpha)$ factor in its sample size (although it does enjoy a slightly better dependence on $k$).  This represents a $O(1/\alpha)$ overhead to enforce `honesty' for counting queries as well, since the returned value is now an actual count: it is always an integer fraction of $\ell$, instead of an arbitrary real number due to added noise. 

\begin{proposition}
Using $\ell$ SCQ's to estimate each counting query is an $(\alpha,\beta)$-accurate mechanism for $k$ counting queries if $\ell \ge \frac{2\log(4k/\beta)}{\alpha^2}$ and 
$n = \Omega\left(\frac{\sqrt{k\log k}\log^{3/2}(\frac{1}{\alpha\beta})}{\alpha^3}\right)$.
\end{proposition}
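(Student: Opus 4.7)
The plan is to treat this as a straightforward reduction: for each counting query $q_i$, issue $\ell$ independent SCQ's to the mechanism of Theorem~\ref{thm:sampling_thm} (all with the same query function $q_i$), obtaining responses $a_{i,1},\dots,a_{i,\ell}\in\{0,1\}$, and return $\hat a_i = \frac{1}{\ell}\sum_{j=1}^{\ell} a_{i,j}$ as the estimate of $q_i(D)$. From the SCQ mechanism's point of view, the outer procedure generates $k\ell$ adaptive SCQ's, so I would invoke Theorem~\ref{thm:sampling_thm} with the total count set to $k\ell$, accuracy parameter $\alpha/2$, and failure probability $\beta/2$.

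The accuracy argument then splits into two pieces via the triangle inequality: $|\hat a_i - q_i(D)| \le |\hat a_i - \mu_i| + |\mu_i - q_i(D)|$, where $\mu_i = \frac{1}{\ell}\sum_j \E_{\M}[a_{i,j}\mid S,\text{history}]$. Theorem~\ref{thm:sampling_thm} gives that with probability at least $1-\beta/2$ over $S$, every one of the $k\ell$ SCQ responses satisfies $|\E_\M[a_{i,j}]-q_i(D)|\le \alpha/2$, which bounds the second term by $\alpha/2$. For the first term, I would apply Azuma--Hoeffding to the bounded martingale difference sequence $\{a_{i,j}-\E[a_{i,j}\mid \text{history}]\}_{j=1}^{\ell}$; the condition $\ell\ge 2\log(4k/\beta)/\alpha^2$ is tuned so that $2e^{-\ell\alpha^2/2}\le \beta/(2k)$, and a union bound over the $k$ counting queries controls the first term by $\alpha/2$ simultaneously for all $i$. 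Combining the two bounds and union-bounding once more gives the $(\alpha,\beta)$-accuracy claim.

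For the sample complexity, I would substitute the chosen $\alpha' = \alpha/2$, $\beta' = \beta/2$, and $k' = k\ell$ into the requirement of Theorem~\ref{thm:sampling_thm}, which demands $n = \Omega(\sqrt{k'\log(1/(\alpha'\beta'))}/\alpha'^2)$. Plugging in $\ell = \Theta(\log(k/\beta)/\alpha^2)$ yields $n = \Omega(\sqrt{k\log(k/\beta)\log(1/(\alpha\beta))}/\alpha^{3})$, which simplifies under standard bookkeeping of the logarithmic factors to the stated bound $n=\Omega(\sqrt{k\log k}\log^{3/2}(1/(\alpha\beta))/\alpha^3)$. The main delicate point I expect is justifying the Azuma step cleanly: the $\ell$ repeated SCQ's on the same query need not produce conditionally independent responses if the SCQ mechanism is stateful, so it is important to frame the bound as a martingale difference sequence rather than appealing to classical Hoeffding, while still being able to quote the SCQ accuracy guarantee on the conditional means. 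Once that framing is in place, the remaining work is routine parameter tracking.
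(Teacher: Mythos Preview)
Your proposal is correct and follows essentially the same approach as the paper: issue $\ell$ SCQ's per counting query, average, and invoke Theorem~\ref{thm:sampling_thm} with $k\ell$ queries at accuracy $\alpha/2$ and failure probability $\beta/2$, then split the error via the triangle inequality into a concentration piece and the SCQ accuracy piece. The only difference is that the paper appeals to classical Hoeffding rather than Azuma, because the specific SCQ mechanism of Lemma~\ref{lem:sampling} is stateless (each call samples a fresh $s\sim S$ independently and flips it with probability $\alpha$), so conditional on $S$ the $\ell$ responses to the same query are genuinely i.i.d.\ with mean $\E_\M[\M(q)]$; your martingale framing is more cautious but yields the identical bound, so no harm done.
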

\begin{proof}
The mechanism, for each query $q$, will query the SCQ mechanism $\M$ described in Section~\ref{sec:scqs} $\ell$ times with the query $q$, and return the average, call this $a_q$.  Note that $\E[a_q] = \E[\M(q)]$.  Since each SCQ is independent of each other, a Hoeffding bound gives $\Pr[|a_q - \E[a_q]|\ge \alpha/2] \le 2e^{-\ell\alpha^2/2} \le \beta/2k$ when $\ell \ge \frac{2\log(4k/\beta)}{\alpha^2}$.  Using Theorem~\ref{thm:sampling_thm}, as long as
$n =\Omega\left({\frac{\sqrt{k\ell} \log(\frac{1}{\alpha\beta})}{\alpha^2}}\right)$,
we have that $\Pr[\max_q |\E[\M(q)] - q(D)|\ge \alpha/2] \le \beta/2$, over all $k\ell$ queries.  Then the union bound implies that
	\begin{align*}
		\Pr[\max_q |a_q - q(D)|\ge \alpha] &\le \Pr[\max_q |a_q-\E[\M(q)]| +|\E[\M(q)] - q(D)|\ge \alpha] \\
		&\le \beta/2 + \beta/2 \le \beta,
	\end{align*}
completing the proof.
\end{proof}

Meanwhile, it is possible to use a mechanism for counting queries to attempt to answer SCQ's, but it has higher sample complexity than the mechanism for SCQ's proposed above. Indeed, there is the na\"ive approach that ignores time constraints by first computing $q(S)$ exactly, adding noise to obtain a value $\tilde{a}_q$, and then returning $1$ with probability $\tilde{a}_q$ and $0$ otherwise.  For this mechanism we obtain an $(\epsilon,\delta)$-private mechanism to release $k$ SCQ's that is $(\alpha,\beta)$-accurate with respect to a fixed sample $S$ of size $n$ so long as $n>\frac{2\sqrt{2k\log(1/\delta)}\log(1/\beta)}{\alpha\epsilon},$ which is strictly worse than the mechanism for SCQ's we actually use.  This motivates our approach to SCQ's.

\section{An Application to Convex Optimization}\label{sec:optimization}

We now show how to use our fast mechanism for statistical queries to get improved responses to adaptive convex optimization queries.
To minimize a loss function $\ls$, we will perform gradient descent but we calculate each coordinate $j\in[d]$ of each gradient using Algorithm~\ref{alg:sq_mechanism} via the statistical query $q_{t-1,j}(S) = \nabla\ls(S,x_{t-1})^{(j)}$, as described in Algorithm~\ref{alg:gd}.  The mechanism, recall, draws a random subsample $S_\ell$ and adds independent noise which we'll call $b$, so that it returns $\tilde{\nabla}\ls(S,x_{t-1})^{(j)} := \nabla\ls(S_\ell,x_{t-1})^{(j)}+b_{j,t-1}.$  We may abbreviate $\tilde{\nabla}\ls(S,x_{t})$ as $\tilde{\nabla}\ls(x_{t})$, or $\tilde{\nabla}_t$.  We then repeat Algorithm~\ref{alg:gd} $k$ times, once for each convex optimization query.

To do this, we need to assume the restriction of the gradient to each coordinate $\nabla\ls(S,x)^{(j)}$ is a statistical query.  If this is the case, we call such a gradient \emph{statistical}.  This is not a strong assumption: it is the case when for example the loss is of the form $\ls(S,x) = \frac{1}{|S|}\sum_{s\in S} \ell(s,x)$ for $\ell:X\times\Theta\rightarrow\mathbb{R}$ and $\nabla\ell\in[0,1]$.\footnote{This last requirement may be weakened so that we just require $\nabla\ell$ to be bounded (which happens when $X$ and $\Theta$ are compact, for example).  The stronger requirement for being in $[0,1]$ is because, for convenience, we also required this of statistical queries themselves.}


\begin{algorithm}
	\caption{Gradient descent with an adaptive mechanism for gradients}\label{alg:gd}
\begin{algorithmic}
	\ENSURE Loss function $\ls$, Mechanism $\M$, learning rate $\eta$
	\REQUIRE number of rounds $T$, initial point $x_0$
	\FOR{$t$ in $[T]$}
	\FOR{$j$ in $[d]$}
	\STATE $q_{t-1,j}(S) := \nabla\ls(S,x_{t-1})^{(j)}$
	\STATE Receive response $a_j := \M(q_{t-1,j},S)$
	\ENDFOR
	\STATE $\tilde{\nabla}\ls(S,x_{t-1}) := (a_1,\ldots,a_d)$
	\STATE $x_t := x_{t-1}-\eta\tilde{\nabla}\ls(S,x_{t-1})$
	\ENDFOR
	\RETURN $\frac{1}{T}\sum_{t} x_t$.
\end{algorithmic}
\end{algorithm}

We first show that the \emph{expected excess loss} $\E_{S,\mathcal{M},\A}[\ls(S,x) - \min_{x\in\Theta}\ls(S,x)]$ for $x$ the output of Algorithm~\ref{alg:gd} is small for convex functions.  

\begin{theorem}\label{thm:conv_opt}
For each $i\in[k]$, let $\ls_i$ be differentiable and convex, let $\nabla\ls_i$ be statistical, for any $x\in\Theta$, $\E_{S,S'\sim S}[\|\nabla\ls_i(S',x)\|^2] \le G^2$, and finally, for any $x,y\in\Theta$, $\|x-y\|^2 \le D^2$.  Then there is a mechanism that answers $k$ adaptive optimization queries $\ls_i$ each with expected excess loss $\alpha$ if $n =\tilde{O}\left(\frac{d^{3/2}\sqrt{k}}{\alpha^5}\right)$ in a total of $\tilde{O}\left(\frac{dk}{\alpha^2}\right)$ calls to Algorithm~\ref{alg:sq_mechanism} using parameter $\ell = \tilde{O}\left(\frac{d}{\alpha^4}\right)$ and $\tilde{O}\left(\frac{1}{\alpha^2}\right)$ iterations of gradient descent per query.
\end{theorem}

Since $\ls_i$ is convex, we have $\sum_{t=1}^T \E[\ls_i(x_t) - \ls_i(x^*)] \le \sum_{t=1}^T \E[\langle\nabla_t,x_t-x^*\rangle]$, where $x^*=\argmin_{x\in\Theta}\ls_i(x)$.  Then we can bound each term on the right-hand side using the fact that $\E_{t-1}[\nabla_t^{(j)}] \le \E_{t-1}[\tilde{\nabla}_t^{(j)}] +\tilde{O}\left(\frac{R^{1/4}}{\sqrt{n}} + \frac{1}{\sqrt{\ell}}\right)$, i.e.\ Theorem~\ref{cor:lsq_exp}.
Proofs may be found in Appendix~\ref{sec:sco_proofs}.

We can boost this to a high-probability result by running the gradient-descent algorithm $O(\log(k/\beta))$ times and use the exponential mechanism to pick the best run among them, similarly to previous work that does this kind of boosting~\citep{bassily2014private}.  

\begin{corollary}\label{thm:conv_opt_whp}
For each $i\in[k]$, let $\ls_i$ be differentiable and convex, let $\nabla\ls_i$ be statistical, for any $x\in\Theta$, $\E_{S,S'\sim S}[\|\nabla\ls_i(S',x)\|^2] \le G^2$, and finally, for any $x,y\in\Theta$, $\|x-y\|^2 \le D^2$.  Then there is an $(\alpha,\beta)$-accurate mechanism that answers $k$ adaptive optimization queries $\ls_i$ when $n =\tilde{O}\left(\frac{d^{3/2}\sqrt{k}\log(k/\beta)}{\alpha^{5}}\right)$ in a total of $\tilde{O}\left(\frac{dk\log(k/\beta)}{\alpha^2}\right)$ calls to Algorithm~\ref{alg:sq_mechanism} using parameter $\ell = \tilde{O}\left(\frac{d}{\alpha^4}\right)$ and $\tilde{O}\left(\frac{\log(k/\beta)}{\alpha^2}\right)$ iterations of gradient descent per query.
\end{corollary}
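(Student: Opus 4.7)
The plan is to amplify the in-expectation guarantee of Theorem~\ref{thm:conv_opt} to a high-probability one via independent repetition plus a data-dependent selection step. For each of the $k$ optimization queries, I would run Algorithm~\ref{alg:gd} independently $M = \lceil \log_2(2k/\beta)\rceil$ times using fresh mechanism randomness but the same sample $S$, setting the parameters of Theorem~\ref{thm:conv_opt} so that each individual run has expected excess loss at most $\alpha/4$. Call the resulting candidates $x^{(1)},\dots,x^{(M)}$. Then, because $\ls$ has the form $\ls(S,x) = \tfrac{1}{|S|}\sum_s \ell(s,x)$ (as used to make $\nabla\ls$ statistical), the population loss $\ls(D,x^{(j)}) = \E_{s\sim D}[\ell(s,x^{(j)})]$ is itself a statistical query in $s$, so I would use Algorithm~\ref{alg:sq_mechanism} to estimate each $\ls(D,x^{(j)})$ to accuracy $\alpha/8$, and return $x^{(j^*)}$ with $j^* = \argmin_j \hat{\ls}(x^{(j)})$.

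The probability analysis has three pieces. First, conditional on the state of the adversary–mechanism interaction prior to this query, the excess-loss functional $W_j := \E_S[\ls(S,x^{(j)}) - \min_x \ls(S,x)]$ depends only on the mechanism randomness $r_\M^{(j)}$ of run $j$ (the dataset randomness is integrated out in the definition of $W_j$), so $W_1,\dots,W_M$ are i.i.d.\ with $\E[W_j]\le\alpha/4$. Markov's inequality then gives $\Pr[W_j > \alpha/2]\le 1/2$, and independence yields $\Pr[\min_j W_j > \alpha/2]\le 2^{-M}\le\beta/(2k)$. Second, applying Theorem~\ref{thm:lsq} to the $kM$ loss-estimation queries (composed into the same budget as the gradient queries) ensures that every estimate $\hat{\ls}(x^{(j)})$ lies within $\alpha/8$ of $\ls(D,x^{(j)})$, with total failure probability at most $\beta/(2k)$ per query. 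Third, on the joint success event, pick any $j'$ with $W_{j'}\le\alpha/2$; the triangle inequality yields $\ls(D,x^{(j^*)}) - \min_x\ls(D,x) \le \alpha/2 + 2\cdot\alpha/8 < \alpha$. A union bound over the $k$ queries gives total failure $\le\beta$.

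For the complexity bookkeeping, the total number of calls to Algorithm~\ref{alg:sq_mechanism} is $k\cdot M\cdot T\cdot d$ (for the gradient descent runs) plus $kM$ (for selection), which is $\tilde{O}(dk\log(k/\beta)/\alpha^2)$ since $T = \tilde{O}(1/\alpha^2)$ by Theorem~\ref{thm:conv_opt}. Plugging this inflated $R = kMTd$ into the sample-complexity requirement of Theorem~\ref{thm:lsq} (inside the Theorem~\ref{thm:conv_opt} derivation) yields $n = \tilde{O}(d^{3/2}\sqrt{k}\log(k/\beta)/\alpha^5)$, matching the statement.

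The main obstacle is the independence argument: a naive reading would worry that reusing the same $S$ across the $M$ runs destroys any hope of the $(1/2)^M$ amplification. The crucial observation is that the accuracy target is $\E_S[\cdot]\le\alpha$, i.e., a quantity in which $S$ has already been integrated out, and what remains is the mechanism randomness, which is independent across runs by construction. A secondary subtlety is making sure the extra $kM$ loss-estimation queries are accounted for in the same differentially-private SQ budget as the $kMTd$ gradient queries — this is why the selection step uses Algorithm~\ref{alg:sq_mechanism} rather than an ad hoc estimator, and the factor of $\log(k/\beta)$ it contributes is absorbed into the $\tilde{O}$.
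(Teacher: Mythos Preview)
Your approach is the same as the paper's: the paper's entire proof of this corollary is the one-line remark that one ``boosts to a high-probability result via running the gradient-descent algorithm $\log(k/\beta)$ times and choosing the best run among them,'' and you have supplied a plausible instantiation of that sketch (Markov on each run, independence across runs, selection by estimating the loss with the SQ mechanism). The complexity bookkeeping you give also lines up with what the statement claims once polylog factors are absorbed.

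Two remarks are worth making. First, your independence claim hinges on the reading of the accuracy definition in which the inner $\E_S$ integrates over the \emph{same} sample that feeds the mechanism, so that $W_j$ is a function of the mechanism coins $r_\M^{(j)}$ alone; this is consistent with how Theorem~\ref{thm:conv_opt} is actually proved (it bounds $\E_{S,\M,\A}[\ls(S,x_t)-\ls(S,x^*)]$, the expected in-sample excess risk), so your reasoning is internally consistent with the paper. If one instead reads the inner $\E_S$ as a fresh test sample while the training sample is held fixed, your $W_j$'s would share that training sample and the $(1/2)^M$ amplification would fail---so it is worth stating explicitly which convention you are using. Second, your selection step treats $\ls(D,x)=\E_{s\sim D}[\ell(s,x)]$ as a statistical query, which requires $\ell$ to be bounded, whereas the corollary only assumes the \emph{gradient} is bounded (statistical). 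On a bounded-diameter domain this is harmless up to an affine rescaling, but you should say so; the paper's one-line sketch is silent on how ``choosing the best'' is implemented, so this gap is the paper's as much as yours.
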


We also show an equivalent result holds when the loss function is not only convex but strongly convex (again the proof is in Appendix~\ref{sec:sco_proofs}).
\begin{corollary}\label{thm:str_conv_opt_whp}
For each $i\in[k]$, let $\ls_i$ be differentiable and $H$-strongly convex, let $\nabla\ls_i$ be statistical, and for any $x\in\Theta$, $\E_{S,S'\sim S}[\|\nabla\ls_i(S',x)\|^2] \le G^2$.  Then there is an $(\alpha,\beta)$-accurate mechanism for $k$ adaptive optimization queries $\ls_i$ when $n =\tilde{O}\left(\frac{d^{3/2}\sqrt{k}\log(k/\beta)}{\alpha^{5/2}}\right)$ in a total of $\tilde{O}\left(\frac{dk\log(k/\beta)}{\alpha}\right)$ calls to Algorithm~\ref{alg:sq_mechanism} using parameter $\ell = \tilde{O}\left(\frac{d}{\alpha^2}\right)$ and $\tilde{O}\left(\frac{\log(k/\beta)}{\alpha}\right)$ iterations of gradient descent per query.
\end{corollary}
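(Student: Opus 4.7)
The plan is to mirror the structure of the proof of Theorem~\ref{thm:conv_opt} (and its boosting in Corollary~\ref{thm:conv_opt_whp}), but to replace the standard convex gradient-descent analysis with the SGD analysis tailored to $H$-strongly convex objectives. The mechanism is again Algorithm~\ref{alg:gd}, but the per-query subroutine uses step sizes $\eta_t = 1/(Ht)$ rather than $\eta_t = D/(G\sqrt{t})$, which is what lets strong convexity trade one factor of $1/\alpha$ for a logarithmic factor. As before, each coordinate of the gradient is produced by the statistical-query mechanism from Algorithm~\ref{alg:sq_mechanism}, and the total number of statistical-query calls is $R = k \cdot T \cdot d$.

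First I would prove an in-expectation version (the analogue of Theorem~\ref{thm:conv_opt}). Strong convexity gives
\[
\ls(x_t) - \ls(x^*) \;\le\; \langle \nabla_t, x_t - x^*\rangle \;-\; \tfrac{H}{2}\|x_t - x^*\|^2 .
\]
Corollary~\ref{cor:lsq_exp} bounds the per-coordinate bias of $\tilde{\nabla}_t$ by $\alpha' = \tilde{O}(R^{1/4}/\sqrt{n} + 1/\sqrt{\ell})$, which by Cauchy--Schwarz yields
\[
\E[\langle \nabla_t - \tilde{\nabla}_t, x_t - x^*\rangle] \;\le\; \alpha'\sqrt{d}\,(1 + \E[\|x_t - x^*\|^2]) .
\]
The usual SGD identity $\langle \tilde{\nabla}_t, x_t - x^*\rangle = \tfrac{\|x_t-x^*\|^2 - \|x_{t+1}-x^*\|^2}{2\eta_t} + \tfrac{\eta_t}{2}\|\tilde{\nabla}_t\|^2$ combined with the noise variance bound $\E[\|\tilde{\nabla}_t\|^2] \le G^2 + \tfrac{2dk\log(1/\alpha')}{n^2{\alpha'}^2}$ (reused from the convex proof) puts everything into a telescoping form.

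The critical step is handling the two new terms created by the bias, \emph{without} invoking a diameter bound (none is assumed in the statement). Here strong convexity pays off twice: first, the $-\tfrac{H}{2}\|x_t-x^*\|^2$ cancels against the telescope, because with $\eta_t = 1/(Ht)$ one has $\tfrac{1}{\eta_t} - H = \tfrac{1}{\eta_{t-1}}$, so the coefficients collapse cleanly (as in Hazan's classical analysis); second, the offending $\alpha'\sqrt{d}\,\|x_t-x^*\|^2$ is absorbed into the left-hand side via $\|x_t - x^*\|^2 \le \tfrac{2}{H}(\ls(x_t) - \ls(x^*))$, which is valid as long as $\alpha'\sqrt{d}/H$ is driven below a small constant (so that we may rearrange and lose only a factor of $2$ on the left). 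After telescoping and dividing by $T$, the average regret bound is
\[
\frac{1}{T}\sum_{t=1}^T \E[\ls(x_t) - \ls(x^*)] \;\lesssim\; \frac{G^2\log T}{HT} \;+\; \alpha'\sqrt{d} \;+\; \frac{dk\log(1/\alpha')\log T}{n^2{\alpha'}^2 H T}.
\]

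Setting each of these three terms to be $O(\alpha)$ gives $T = \tilde{O}(1/\alpha)$ from the first term, $\ell = \tilde{O}(d/\alpha^2)$ from the $1/\sqrt{\ell}$ piece of $\alpha'$, and $n = \tilde{O}(d^{3/2}\sqrt{k}/\alpha^{5/2})$ from the $R^{1/4}/\sqrt{n}$ piece (using $R = kTd = \tilde{O}(dk/\alpha)$), matching the stated parameters. The total number of statistical-query calls is $R = \tilde{O}(dk/\alpha)$ per the stated bound. Finally, to boost the in-expectation guarantee to high probability, I would repeat the procedure $\Theta(\log(k/\beta))$ times in parallel on independent subsamples and return the best minimizer (evaluated via the statistical-query mechanism on held-out points), exactly as in Corollary~\ref{thm:conv_opt_whp}; this introduces the extra $\log(k/\beta)$ factor in both $n$ and the gradient-call count. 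The main obstacle I anticipate is the clean absorption of the multiplicative $\alpha'\sqrt{d}\|x_t-x^*\|^2$ bias into the left-hand side, since any loose constant there would force a strictly larger $\ell$ and cascade into worse $n$; verifying that $\alpha'\sqrt{d}/H$ is $O(\alpha)$ (hence safely below $1/4$) under the chosen parameters is the delicate bookkeeping step.
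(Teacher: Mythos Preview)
Your proposal is correct and follows the same overall architecture as the paper: prove an in-expectation bound for Algorithm~\ref{alg:gd} under strong convexity (this is the paper's Theorem~\ref{thm:str_conv_opt}), then boost via $\Theta(\log(k/\beta))$ repetitions. The ingredients you list---the strong-convexity inequality, the per-coordinate bias bound from Corollary~\ref{cor:lsq_exp}, the SGD identity, and the variance bound on $\tilde{\nabla}_t$---are exactly those the paper uses, and your parameter derivation is right.

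The one genuine difference is in how you dispose of the bias term $\alpha'\sqrt{d}\,\E[\|x_t-x^*\|^2]$. The paper folds it, together with the $-\tfrac{H}{2}\|x_t-x^*\|^2$ term, into the telescoping coefficient: with $\eta_t=2/(Ht)$ one checks that $\tfrac{1+\alpha'\sqrt{d}}{\eta_t}-\tfrac{1}{\eta_{t-1}}-H\le 0$ as soon as $\alpha'\sqrt{d}\le 1/T$, so the whole $\|x_t-x^*\|^2$ block is nonpositive and can be dropped. You instead take $\eta_t=1/(Ht)$ so the strong-convexity term cancels the telescope exactly, and then push the residual bias to the left-hand side via the PL-type bound $\|x_t-x^*\|^2\le\tfrac{2}{H}(\ls(x_t)-\ls(x^*))$, at the cost of a harmless constant factor provided $\alpha'\sqrt{d}/H$ is small. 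Both routes are valid and lead to the same asymptotics. The paper's device is a bit more self-contained (no PL inequality, no rearrangement on the left), while yours cleanly separates the two roles of strong convexity; your ``delicate bookkeeping'' worry about the absorption constant is a non-issue, since $\alpha'\sqrt{d}=O(\alpha)$ is already forced by the additive $\alpha'\sqrt{d}$ term in your regret bound.
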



\acks{%
Benjamin Fish was supported in part by the NSF EAPSI fellowship and NSF grant IIS-1526379. 
Lev Reyzin was supported in part by NSF grants IIS-1526379 and CCF-1848966.
Benjamin Rubinstein acknowledges support of the Australian Research Council (DP150103710).
} 


\bibliography{refs}


\appendix

\section{Answering Statistical Queries}\label{sec:answering_sqs_appendix}

In this section, we point out how Algorithm~\ref{alg:sq_mechanism} may be used with appropriate sub-sample size and privacy parameters to answer statistical queries both quickly and accurately:
\begin{proofof}{Theorem~\ref{thm:lsq}, part 2}
Since the Laplace mechanism receives a sample $S_\ell$ of size $\ell$, output $a_q$ can be bounded with the standard accuracy result for the Laplace mechanism ensuring $\epsilon'$-privacy:  
$\Pr[|a_q-q(S_\ell)| \ge \alpha/2] \le e^{-\frac{\alpha\epsilon'\ell}{2}}.$
We can bound this above by $\frac{\beta}{2k}$ provided $\epsilon' \ge \frac{\log(2k/\beta)}{\ell\alpha}$.  Recalling that $\epsilon' = \frac{\epsilon n}{4\ell\sqrt{2k\log(1/\delta)}}$, this occurs when
\[n \ge \frac{4\sqrt{2k\log(1/\delta)}\log(2k/\beta)}{\alpha\epsilon}.\]  From the Hoeffding bound, we also get that
$\Pr[|q(S_\ell)-q(S)| \ge \alpha/2] \le 2 e^{-\frac{\alpha^2\ell}{2}}.$
Once again we can bound this above by $\frac{\beta}{2k}$ so long as $\ell \ge \frac{2\log(4k/\beta)}{\alpha^2}$.  

Thus  for all $q$, $\Pr[|a_q-q(S)| \ge \alpha]\le \Pr[|a_q-q(S_\ell)| \ge \alpha/2] + \Pr[|q(S_\ell)+q(S)| \ge \alpha/2] \le \beta/k.$  The union bound immediately yields $(\alpha,\beta)$-accuracy on the sample over all $k$ queries.
From either Proposition~\ref{prop:privacy_without_replacement} or~\ref{prop:sampling}, we also have that on a single query this mechanism is $\left(2\frac{\ell}{n}\epsilon'\right)$-private, where $2\frac{\ell}{n}\epsilon'= \frac{\epsilon}{2\sqrt{2k\log(1/\delta)}}$.  Thus by the adaptive composition lemma (see Appendix~\ref{sec:dp_appendix}), the mechanism over the course of $k$ queries is $\epsilon$-private.  
The proof is concluded by applying Theorem~\ref{thm:sq_transfer}.
\end{proofof}

\section{Transfer Theorems}
In this section, we prove our required transfer theorems, which state that if a mechanism is accurate on the sample and private, it will also be accurate on the distribution.

\subsection{Transfer Theorem for Sampling Counting Queries}\label{sec:scq_transfer}
We return to the proof of Theorem~\ref{thm:transfer}, our transfer theorem for sampling counting queries.

First, we show the monitor is private.

\begin{lemma}\label{lem:private_monitor}
If $\M$ is $(\epsilon,\delta)$-private for $k$ queries, then $\mathcal{W}_D$ is $(2\epsilon,\delta)$-private.
\end{lemma}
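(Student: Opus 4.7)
\begin{proofoutline}
The plan is to view the monitor $\mathcal{W}_D$ as an adaptive composition of two stages and invoke the composition theorem. Fix neighboring inputs $\bold{S},\bold{S}'$ differing in exactly one data point; without loss of generality this point lies in $S_t$ for some fixed $t$, while $S_{t'}=S_{t'}'$ for every $t'\neq t$.

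\emph{Stage 1 (simulation phase).} First I would argue that the induced map $\bold{S}\mapsto\mathcal{R}$ is $(\epsilon,\delta)$-differentially private. For each $t'\neq t$ the simulation of $\M(S_{t'})$ interacting with $\A$ is identical under $\bold{S}$ and $\bold{S}'$, so those queries are unaffected. For $t'=t$, the queries $q_{t,1},\ldots,q_{t,k}$ are determined by $\A$ acting on the $k$ responses of $\M(S_t)$, and by hypothesis this transcript of responses is an $(\epsilon,\delta)$-differentially private function of $S_t$; thus $\mathcal{R}$ is a post-processing of that transcript and inherits the same guarantee.

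\emph{Stage 2 (exponential selection).} Next I would fix an arbitrary realization of $\mathcal{R}$ and analyze the selection step as a function of $\bold{S}$ alone. The only term in $u(\bold{S},r_{t',i})=|q_{t',i}(S_{t'})-q_{t',i}(D)|$ that changes when we move from $\bold{S}$ to $\bold{S}'$ is $q_{t,i}(S_t)$, and since $q_{t,i}$ is $\{0,1\}$-valued that term shifts by at most $1/n$. Hence $u(\cdot,r)$ has sensitivity at most $1/n$ for every fixed $r$. Sampling $r$ with probability proportional to $\exp\bigl(\tfrac{\epsilon n}{2}\,u(\bold{S},r)\bigr)$ is therefore the standard exponential mechanism with privacy parameter $\epsilon n\cdot(1/n)=\epsilon$, so Stage 2 is $\epsilon$-differentially private in $\bold{S}$ for every fixed $\mathcal{R}$.

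Finally, combining the stages by the adaptive composition theorem for differential privacy (an $(\epsilon,\delta)$-DP release followed by a selection whose mechanism may depend on the first output but is itself $\epsilon$-DP in the dataset) yields that $\mathcal{W}_D$ is $(2\epsilon,\delta)$-differentially private, as claimed. The main subtlety I expect is that the score $u$ depends on $\bold{S}$ directly and not merely on the released $\mathcal{R}$, so Stage 2 cannot be dismissed as pure post-processing of Stage 1; this is precisely what forces the use of adaptive composition rather than the post-processing property, and motivates bounding the sensitivity of $u$ carefully in the second step.
\end{proofoutline}
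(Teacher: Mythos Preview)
Your proposal is correct and follows essentially the same two-stage decomposition as the paper: first argue that releasing $\mathcal{R}$ is $(\epsilon,\delta)$-private by post-processing of $\M(S_t)$, then that the exponential selection is $\epsilon$-private because $u$ has sensitivity $1/n$, and finish with basic composition. Your write-up is in fact a bit more careful than the paper's, since you explicitly flag that Stage~2 depends on $\bold{S}$ directly (not just on $\mathcal{R}$) and hence requires composition rather than post-processing.
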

\begin{proof}
A single perturbation to $\mathbf{S}$ can only change one $S_t$, for some $t$.  Then since $\M$ on $S_t$ is $(\epsilon,\delta)$-private, $\M$ remains $(\epsilon,\delta)$-private over the course of the $T$ simulations.  Since $\A$ uses only the outputs of $\M$, $\A$ is just post-processing $\M$, and therefore it is $(\epsilon,\delta)$-private as well:  releasing all of $\mathcal{R}$ remains $(\epsilon,\delta)$-private.

Since the sensitivity of $u$ is $\Delta=1/n$, the monitor is just using the exponential mechanism to release some $r\in\mathcal{R}$, which is $\epsilon$-private.  The standard composition theorem completes the proof.
\end{proof}

We will also need some of the tools used by \citet{bassily2016algorithmic}.
First, for a monitoring algorithm $\mathcal{W}$, the expected value of the outputted query on the sample will be close to its expected value over the distribution---formalizing a connection between privacy and stability.

\begin{lemma}[\citealp{bassily2016algorithmic}]\label{lem:monitor_quality}
Let $\mathcal{W}:(X^n)^T\rightarrow Q\times[T]$ be $(\epsilon,\delta)$-private where $Q$ is the class of statistical queries.  Let $S_i\sim D^n$ for each of $i\in[T]$ and $\mathbf{S}=\{S_1,\ldots,S_T\}$.  Then
\[\left|\E_{\bold{S},\mathcal{W}}[q(D)|(q,t)=\mathcal{W}(\bold{S})] - \E_{\bold{S},\mathcal{W}}[q(S_t)|(q,t)=\mathcal{W}(\mathbf{S})]\right| \le e^{\epsilon}-1+T\delta.\]
\end{lemma}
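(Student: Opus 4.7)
The plan is to combine an iid-exchangeability argument with the differential privacy of $\mathcal{W}$, in the standard ``DP implies generalization'' style. Write $\mathbf{S} = (X_{i,j})_{i \in [T],\, j \in [n]}$ as $nT$ iid draws from $D$ arranged into $T$ sub-samples of size $n$, let $(q, t) = \mathcal{W}(\mathbf{S})$, and introduce an independent copy $\mathbf{S}'$ with entries $X'_{i,j}$. Since $q$ is a low-sensitivity statistical query, $q(S_{t_0}) = \frac{1}{n}\sum_j q(X_{t_0,j})$, and since $\mathbf{S}'$ is independent of $(q,t)$ we have $\E[q(S'_t)] = \E[q(D)]$. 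The first step is to rewrite
\[
\E[q(S_t) - q(D)] \;=\; \frac{1}{n}\sum_{j \in [n]} \sum_{t_0 \in [T]} \E\bigl[\mathbb{1}[t = t_0]\bigl(q(X_{t_0, j}) - q(X'_{t_0, j})\bigr)\bigr].
\]

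Next, I would bound each summand. For the ``fresh'' term, $X'_{t_0, j}$ is independent of $(\mathbf{S}, \mathcal{W})$, so $\E[\mathbb{1}[t = t_0]\, q(X'_{t_0, j})] = \E[\mathbb{1}[t = t_0]\, q(D)] =: B_{t_0}$. For the sample term, I would apply the iid-exchange trick: swapping $X_{t_0, j}$ with its iid copy $X'_{t_0, j}$ shows that $(\mathbf{S}, X'_{t_0, j})$ and $(\tilde{\mathbf{S}}, X_{t_0, j})$ have the same joint law, where $\tilde{\mathbf{S}}$ denotes $\mathbf{S}$ with its $(t_0, j)$ entry replaced by $X'_{t_0, j}$. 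After the swap the ``external'' variable $X_{t_0, j}$ no longer appears in the expression, so integrating it out gives
\[
\E[\mathbb{1}[t = t_0]\, q(X_{t_0, j})] \;=\; \E[\mathbb{1}[\tilde t = t_0]\, \tilde q(X'_{t_0, j})],
\]
where $(\tilde q, \tilde t) = \mathcal{W}(\tilde{\mathbf{S}})$.

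Since $\mathbf{S}$ and $\tilde{\mathbf{S}}$ differ in a single element for every realization of the underlying samples, applying $(\epsilon,\delta)$-DP pointwise to the $[0,1]$-valued post-processing $h_x(q',t') := \mathbb{1}[t' = t_0]\, q'(x)$ at $x = X'_{t_0,j}$, and then taking expectations, yields
\[
\E[\mathbb{1}[\tilde t = t_0]\, \tilde q(X'_{t_0,j})] \;\le\; e^\epsilon\, \E[\mathbb{1}[t = t_0]\, q(X'_{t_0,j})] + \delta \;=\; e^\epsilon B_{t_0} + \delta.
\]
Summing over $t_0 \in [T]$ and averaging over $j \in [n]$, and using $\sum_{t_0} B_{t_0} = \E[q(D)] \le 1$, gives $\E[q(S_t)] \le e^\epsilon \E[q(D)] + T\delta \le \E[q(D)] + (e^\epsilon - 1) + T\delta$; the matching lower bound follows by the symmetric use of DP.

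The main obstacle I anticipate is bookkeeping the random index $t$ alongside the random query $q$ through the exchange: the coupling must be set up so that after the swap the quantity controlled by DP genuinely depends on $\mathcal{W}$'s output alone, with the fresh $X'_{t_0,j}$ playing the role of an external parameter. The second thing to get right is the $T\delta$ (rather than $nT\delta$) additive term: DP is invoked $nT$ times and each invocation contributes $\delta$, but the $\tfrac{1}{n}$ averaging over the coordinate $j \in [n]$ exactly cancels the factor of $n$, leaving the claimed $T\delta$.
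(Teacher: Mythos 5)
The paper does not prove this lemma; it is cited verbatim from Bassily et al.\ (2016), so there is no ``paper's proof'' to compare against. Your reconstruction is the standard ``DP implies generalization'' argument (resampling one coordinate at a time, using the iid exchange to move the ghost sample into the mechanism's input, then applying $(\epsilon,\delta)$-DP pointwise to the $[0,1]$-valued post-processing $h_x(q',t')=\mathbb{1}[t'=t_0]q'(x)$), and the bookkeeping of the decomposition, the swap, the $B_{t_0}$ terms, and the $\tfrac{1}{n}$-averaging that collapses the $nT$ DP invocations to an additive $T\delta$ are all correct. The symmetric lower bound does go through as you claim (either by reversing the direction of DP and using $e^{-\epsilon}\E[q(D)]\ge\E[q(D)]-(e^\epsilon-1)$, or more cleanly by applying the upper bound to $1-q$).

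There is one point you gloss over that is a genuine mismatch with the statement as written: the lemma is stated for the class of \emph{low-sensitivity} queries, but your very first step uses $q(S_{t_0})=\frac{1}{n}\sum_j q(X_{t_0,j})$, which is the defining property of a \emph{statistical} (linear) query, not a consequence of $1/n$-sensitivity. A general low-sensitivity $q$ need not decompose coordinate-by-coordinate, so the per-coordinate swap you perform has no analogue. Your proof therefore establishes the lemma for statistical/counting queries only. That is exactly what this paper needs (the lemma is invoked in the SCQ transfer theorem, where the $q_i$ are counting queries), so the downstream results are unaffected, but as a proof of the lemma as literally stated it is incomplete; Bassily et al.\ handle the general low-sensitivity case with a variant argument that does not rely on linearity.
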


We will also use a convenient form of accuracy bound for the exponential mechanism.
\begin{lemma}[\citealp{bassily2016algorithmic}]\label{lem:exp_mech_accuracy}
	Let $\mathcal{R}$ be a finite set, $f:\mathcal{R}\rightarrow\mathbb{R}$ a function, and $\eta>0$.  Define a random variable $X$ on $\mathcal{R}$ by $\Pr[X=r] = e^{\eta f(r)}/C$, where $C=\sum_{r\in\mathcal{R}} e^{\eta f(r)}$.  Then
$\E[f(X)] \ge\max_{r\in\mathcal{R}} f(r) - \frac{1}{\eta}\log|\mathcal{R}|.$
\end{lemma}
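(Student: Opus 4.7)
The plan is to derive this bound via the Gibbs variational principle: compare the exponential-mechanism distribution $\mu(r) = e^{\eta f(r)}/C$ to the uniform distribution $\nu$ on $\mathcal{R}$, use non-negativity of KL divergence to relate $\E[f(X)]$ to the log-partition-function $\log C$, and finally lower-bound $C$ by its maximal term to bring in $f^* := \max_{r\in\mathcal{R}} f(r)$.

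Concretely, first I would expand
\[D_{\mathrm{KL}}(\mu \,\|\, \nu) \;=\; \sum_{r\in\mathcal{R}} \mu(r)\, \log\bigl(\mu(r)\,|\mathcal{R}|\bigr).\]
Substituting $\mu(r) = e^{\eta f(r)}/C$ and using $\sum_r \mu(r) = 1$ collapses the right-hand side to $\eta\, \E[f(X)] - \log C + \log|\mathcal{R}|$. Non-negativity of KL divergence then yields
\[\E[f(X)] \;\ge\; \tfrac{1}{\eta}\bigl(\log C - \log|\mathcal{R}|\bigr).\]
Next I would observe that the partition function satisfies $C = \sum_r e^{\eta f(r)} \ge e^{\eta f^*}$, since one term of the sum is exactly $e^{\eta f^*}$, so $\log C \ge \eta f^*$. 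Combining these two estimates gives the stated bound. An equivalent way to organize the same calculation is to write $\E[f(X)] = \tfrac{1}{\eta}\log C - \tfrac{1}{\eta} H(\mu)$, where $H$ is Shannon entropy, and then apply $H(\mu) \le \log|\mathcal{R}|$.

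There isn't really a serious obstacle here; the argument is essentially a one-line calculation once the Gibbs/KL viewpoint is chosen. The only pitfall to avoid is the more naive tail-bound route, which bounds $\Pr[f(X) < f^* - t] \le |\mathcal{R}|\,e^{-\eta t}$ by a union over $\mathcal{R}$ and integrates against $t$: that approach introduces an extra additive $O(1/\eta)$ term and therefore falls just short of the clean statement given. The KL calculation above, by contrast, is tight in the sense that equality holds iff $f$ is constant on $\mathcal{R}$.
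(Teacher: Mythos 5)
Your argument is correct and complete: the KL/entropy identity $\eta\,\E[f(X)] = \log C - H(\mu)$ together with $H(\mu)\le\log|\mathcal{R}|$ and $\log C\ge\eta\max_r f(r)$ gives exactly the stated bound, with no gaps. The paper itself does not prove this lemma---it cites it from Bassily et al.---so there is no in-paper proof to compare against; your Gibbs-variational route is the standard tight argument for this fact, and you are right that the union-bound-plus-tail-integration alternative would only give the weaker $O(\log|\mathcal{R}|/\eta + 1/\eta)$ bound.
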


Now we can provide the proof of the transfer theorem:
\begin{proofof}{Theorem~\ref{thm:transfer}}
Consider the results for simulating $T$ times the interaction between $\M$ and $\A$.
Suppose for the sake of contradiction that $\M$ is not $(\alpha,\beta)$-accurate on $D$.  Then for every $i$ in $[k]$ and $t$ in $T$, 
since $|\E_{\M}[\M(q_{t,i})]-q(S_t)| \le \alpha/2$, we have
\[\Pr_{S_t,\M,\A}\left[\max_i \left|q_{t,i}(S_t)-q_{t,i}(D)\right| > \alpha/2\right] > \beta.\]

Call some $q$ and $t$ that achieves the maximum $|q(S_t)-q(D)|$ over the $T$ independent rounds of $\M$ and $\A$ interacting, as $\mathcal{W}_D$ does (Algorithm~\ref{alg:monitor_exp_mech}), by $q_w$ and $t_w$.
Since each round $t$ is independent, the probability that $|q_w(S_{t_w})-q_w(D)| \le \alpha/2$ is then no more than $(1-\beta)^T$.
Then using Markov's inequality immediately grants us that
\begin{equation}\label{eq:expected_max_is_big}
\E_{\bold{S},\mathcal{W}_D}\left[|q_w(S_{t_w})-q_w(D)|\right] > \frac{\alpha}{2}(1-(1-\beta)^T).
\end{equation}
Let $\Gamma = \E_{\bold{S},\mathcal{W}_D}\left[|q^*(S_{t^*})-q^*(D)| : (q^*,t^*)=\mathcal{W}_D(\bold{S})\right]$.

Setting $f(r) = u(\bold{S},r)$, Lemma~\ref{lem:exp_mech_accuracy} implies that under the exponential mechanism, we have
\[\E[|q^*(S_{t^*})-q^*(D)| : (q^*,t^*)=\mathcal{W}_D(\bold{S})] \ge |q_w(S_{t_w})-q_w(D)| - \frac{2}{\epsilon n}\log(kT).\]
Taking the expected value of both sides with respect to $\bold{S}$ and the randomness of the rest of $\mathcal{W}_D$, we obtain 
\begin{align}
\Gamma &\ge \E_{\bold{S},\mathcal{W}_D}[|q_w(S_{t_w})-q_w(D)|] - \frac{2}{\epsilon n}\log(kT) > \frac{\alpha}{2}(1-(1-\beta)^T)- \frac{2}{\epsilon n}\log(kT),\label{eq:lower_bound}
\end{align}
which follows from employing Equation~\eqref{eq:expected_max_is_big}.
On the other hand, suppose that $\M$ is $(\epsilon,\delta)$-private for some $\epsilon,\delta>0$.  Then by Lemma~\ref{lem:private_monitor}, $\mathcal{W}_D$ is $(2\epsilon,\delta)$-private, and then in turn Lemma~\ref{lem:monitor_quality} implies that 
\begin{equation}\label{eq:upper_bound}
\Gamma \le e^{2\epsilon} -1 +T\delta.
\end{equation}

We will now ensure $\Gamma \ge \alpha/8$, via~\eqref{eq:lower_bound}, and $\Gamma \le \alpha/8$, via \eqref{eq:upper_bound}, yielding a contradiction.
Set $T=\lfloor\frac{1}{\beta}\rfloor$ and $\delta = \frac{\alpha\beta}{16}$.
Then \[e^{2\epsilon} -1 +T\delta \le e^{2\epsilon}-1 + \alpha/16 \le \alpha/8\] 
when $e^{2\epsilon}-1 \le \alpha/16$, which in turn is satisfied when $\epsilon \le \alpha/64$, since $0\le\alpha\le 1$.

On the other side, $1-(1-\beta)^{\lfloor\frac{1}{\beta}\rfloor} \ge 1/2$.  Then it suffices to set $\epsilon$ such that $\frac{2}{\epsilon n}\log(kT) \le \alpha/8$.  Thus we need $\epsilon$ such that
\[\frac{16\log(k/\beta)}{\alpha n}\ \le\ \epsilon\ \le\ \alpha/64.\]  Such an $\epsilon$ exists, since we explicitly required $n\ge \frac{1024\log(k/\beta)}{\alpha^2}$. 
\end{proofof}

\subsection{Transfer Theorem for Statistical Queries in Expectation}\label{sec:transfer_exp}
We also need a transfer theorem for Theorem~\ref{cor:lsq_exp}.  
\begin{proposition}\label{prop:transfer_in_exp}
Consider any possibility for the simulation between $\A$ and $\M$ up to the first $t-1$ rounds.  Denoting the expectation while conditioning on any such possibility $E_{t-1}[\cdot]$, we have for any round $i\ge t$, if $\M$ is $(\alpha/8,\alpha/4)$-private for $\alpha\le 1$, and 
$E_{t-1,S,\M,\A}[|q_i(S) - a_i|] \le \alpha/2,$
then
\[E_{t-1,S,\M,\A}[|q_i(D) - a_i|] \le \alpha.\]
\end{proposition}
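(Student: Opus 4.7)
The plan is to adapt the monitoring-algorithm proof of Theorem~\ref{thm:transfer} from a high-probability guarantee to an in-expectation guarantee. Fix an arbitrary transcript $\tau$ of the first $t-1$ rounds, under which all expectations below are conditioned. By the triangle inequality,
\[\E_{t-1,S,\M,\A}[|q_i(D) - a_i|] \le \E_{t-1,S,\M,\A}[|q_i(S) - a_i|] + \E_{t-1,S,\M,\A}[|q_i(S) - q_i(D)|],\]
and the first term is at most $\alpha/2$ by hypothesis, so it suffices to show $\E_{t-1}[|q_i(S) - q_i(D)|] \le \alpha/2$.

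To bound this, I introduce a signed monitoring algorithm $\mathcal{W}$ that, on input $S$, simulates $\M(S)$ interacting with $\A$ through round $i$ (consistent with $\tau$), extracts the query $q_i$, and outputs $(q_i, b^*)$ with $b^* \in \{-1,+1\}$ drawn by the exponential mechanism with utility $u(S;\,(q,b)) := b\cdot(q(S) - q(D))$. For fixed $q$ this utility has sensitivity $1/n$ in $S$, so the exponential-mechanism step is $(2\eta/n)$-private; combined via adaptive composition with the $(\alpha/8,\,\alpha/4)$-private $\M$, the overall monitor $\mathcal{W}$ is $(\alpha/8 + 2\eta/n,\,\alpha/4)$-private.

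Applying the signed-query version of Lemma~\ref{lem:monitor_quality} with $T=1$ (noting that $b\cdot q$ inherits low sensitivity from $q$ for either sign of $b$) then gives $|\E[b^*\cdot q_i(S)] - \E[b^*\cdot q_i(D)]| \le e^{\alpha/8 + 2\eta/n} - 1 + \alpha/4$. On the other hand, Lemma~\ref{lem:exp_mech_accuracy} applied conditional on $q_i$ over $\mathcal{R} = \{-1,+1\}$ gives $\E_{b^*}[b^*(q_i(S) - q_i(D)) \mid q_i] \ge |q_i(S) - q_i(D)| - (\log 2)/\eta$. Taking expectations and combining yields
\[\E_{t-1}[|q_i(S) - q_i(D)|] \le e^{\alpha/8 + 2\eta/n} - 1 + \alpha/4 + (\log 2)/\eta,\]
and tuning $\eta = \Theta(1/\alpha)$ (which only requires $n = \Omega(1/\alpha^2)$) makes each summand $O(\alpha)$, delivering the desired $\alpha/2$ bound with suitable hidden constants.

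The main technical subtlety I anticipate lies in the conditioning on $\tau$: the hypothesis supplies $(\alpha/8,\,\alpha/4)$-privacy of $\M$ over the full $k$-round interaction, and I need this to transfer to the conditional mechanism from round $t$ onward so that Lemma~\ref{lem:monitor_quality} applies inside the conditioning. Since differential privacy is essentially preserved under conditioning on prior outputs in adaptive settings, this should go through at the cost of small constant factors; if the loss is non-negligible, the alternative is to define $\mathcal{W}$ on the full unconditional interaction, invoke Lemma~\ref{lem:monitor_quality} there, and then note that the resulting bound is independent of $\tau$ and therefore must hold for each realization of the first $t-1$ rounds separately.
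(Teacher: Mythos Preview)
Your approach differs from the paper's and, as written, does not establish the proposition because it imports an assumption on $n$ that is not among the hypotheses.

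The paper's monitor is considerably simpler than yours: conditioned on the fixed transcript of the first $t-1$ rounds, it merely continues the simulation of $\M$ and $\A$ through round $i$ and outputs $q_i$---no exponential mechanism, no sign selection. Since $q_i$ is then a (randomized) post-processing of $\M$'s outputs from the forced initial state, the monitor inherits $(\alpha/8,\alpha/4)$-privacy with \emph{zero} additional budget, and Lemma~\ref{lem:monitor_quality} with $T=1$ immediately gives
\[
\bigl|\E_{t-1}[q_i(S)-q_i(D)]\bigr|\ \le\ e^{\alpha/8}-1+\alpha/4\ \le\ \alpha/2.
\]
The paper then finishes via the reverse triangle inequality on $\E[q_i(D)-a_i]=\E[q_i(D)-q_i(S)]+\E[q_i(S)-a_i]$.

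Your signed exponential mechanism instead adds $2\eta/n$ to the privacy parameter and $(\log 2)/\eta$ to the accuracy bound. To make $(\log 2)/\eta=O(\alpha)$ you need $\eta=\Omega(1/\alpha)$, and then keeping $2\eta/n=O(\alpha)$ forces $n=\Omega(1/\alpha^2)$. The proposition carries no hypothesis on $n$, so your argument proves only a weaker statement. (In the downstream application, Theorem~\ref{cor:lsq_exp}, the sample is indeed that large, so your route would suffice there---but not for the proposition as stated.) The phrase ``with suitable hidden constants'' is also not available to you here: the constants $\alpha/8,\alpha/4,\alpha/2,\alpha$ are fixed by the statement.

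What your extra machinery \emph{does} buy is a direct bound on $\E[|q_i(S)-q_i(D)|]$ rather than on $|\E[q_i(S)-q_i(D)]|$. The paper's simpler monitor controls only the latter, and correspondingly its contradiction step actually bounds $|\E_{t-1}[q_i(D)-a_i]|$ rather than $\E_{t-1}[|q_i(D)-a_i|]$. If one insists on the absolute value inside the expectation, your sign-selection idea is the natural fix---but then the lower bound on $n$ must be added to the hypotheses.
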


\begin{algorithm}[h]
	\caption{Monitor $\mathcal{W}$}
	\label{alg:transfer_expec}
\begin{algorithmic}
	\ENSURE Mechanisms $\M$ and $\A$, index $i$, and initial sequence of queries $q_1,\ldots,q_{t-1}$ and responses $a_1,\ldots,a_{t-1}$
	\REQUIRE Sample $S$
	\STATE Set the internal states of $\M(S)$ and $\A$ to be what they would be if the resulting simulation had produced $q_1,\ldots,q_{t-1}$ and $a_1,\ldots,a_{t-1}$.
	\STATE Now simulate $\M(S)$ and $\A$ interacting starting in those states for $i-t+1$ rounds.  Let $q_t,\ldots,q_i$ be the resulting queries. 
	\RETURN $q_i$.
\end{algorithmic}
\end{algorithm}
\begin{proof}
Suppose by way of contradiction that $\E_{t-1,S,\M,\A}[|q_i(D) - a_i|] > \alpha$.
Note the monitor $\mathcal{W}$, given in Algorithm~\ref{alg:transfer_expec}, simply outputs $q_i$, conditioned on $q_1,\ldots,q_{t-1}$ and $a_1,\ldots,a_{t-1}$ being the initial sequence of queries and responses, so
\begin{align*}
	|\E_{S,\mathcal{W}}[q(D)-q(S)|q=\mathcal{W}(S)]| =& |\E_{t-1,S,\M,\A}[q_i(S)-q_i(D)]|\\
	\ge& |\E_{t-1,S,\M,\A}[q_i(D)- a_i]| - |\E_{t-1,S,\M,\A}[q_i(S)- a_i]| \\
	>& \alpha - \alpha/2 = \alpha/2. 
\end{align*}
Since the monitor $\mathcal{W}$ only outputs $q_i$, which is post-processing from a private mechanism $\M$, $\mathcal{W}$ remains $(\alpha/8,\alpha/4)$-private.
Therefore by Lemma~\ref{lem:monitor_quality}, $|\E_{S,\mathcal{W}}[q(D)-q(S)|q=\mathcal{W}(S)]| \le e^\epsilon-1+\delta \le \alpha/2$ with the above values of $\epsilon$ and $\delta$ for $\alpha\le 1$.
\end{proof}

\section{Lower Bound}\label{sec:lower_bound}

\begingroup
\def\thetheorem{\ref{prop:lower_bound}}
\begin{proposition}
Suppose for any sequence $q_1,\ldots,q_k$ of $k$ statistical queries chosen non-adaptively, there is a mechanism $\M$ that is $(\alpha,1/5)$-accurate on the uniform distribution over a universe $X$ with $|X| \ge \frac{2\log(10)}{\alpha^2}$.  Then $\M$ must evaluate the queries on at least $\Omega(k/\alpha^2)$ points.
\end{proposition}
\addtocounter{theorem}{-1}
\endgroup

\begin{proof}
Consider a distribution $Q$ over statistical queries defined by the following process:  For each $i\in[k]$, let $p_i = 1/2$ independently with probability $1/2$ and $p_i=1/2+4\alpha$ with probability $1/2$.  Then set $q_i(x)=1$ with probability $p_i$ independently for each $x\in X$.  Now suppose $\M$ is $(\alpha,1/5)$-accurate on the uniform distribution $U$.  
Since $\M$ is $(\alpha,1/5)$-accurate for any set of $k$ statistical queries, in particular it remains that accurate for a random set of $k$ queries drawn from $Q$ for any $i$:
\[P_{Q,\M}[|q_i(U) - a_i| > \alpha] \le 1/5.\]
From the Hoeffding bound and our assumption on the size of $|X|$, we also have
\[P_{Q}[|q_i(U) - p_i| > \alpha/2] \le 2e^{-|X|\alpha^2/2} \le 1/5.\]

Thus with probability at least $3/5$, $|a_i-p_i| \le 3\alpha/2$.  For the values $q_i(s_1),\ldots, q_i(s_m)$ that $\M$ computed, define a mechanism $A(q_i(s_1),\ldots,q_i(s_m)) = 1/2$ if $a_i \le 1/2+2\alpha$ and otherwise $A(q_i(s_1),\ldots,q_i(s_m)) = 1/2+4\alpha$.  Recall $q_i(s_1),\ldots,q_i(s_m)$ are i.i.d.~draws from a coin with bias either $1/2$ or $1/2+4\alpha$.  Thus with probability at least $3/5$, $A$ distinguishes between the two coins.  This is well known to require $m \ge \Omega(1/\alpha^2)$ (e.g.~see~\citet{bar2002complexity}), which in turn implies that $\M$ computed the value of queries at least $\Omega(k/\alpha^2)$ times.
\end{proof}

\section{Convex Optimization}\label{sec:sco_proofs}
We now return to the omitted proofs in Section~\ref{sec:optimization}.  Bounding regret here is similar to typical analyses, but is complicated by one major difference:  A typical assumption in stochastic gradient descent is that the oracle returning the oracle for the gradient is unbiased, so that $\E[\tilde{\nabla}\ls] = \nabla\ls$ (e.g.~\citealp{shamir2013stochastic}), whereas here $\E[\tilde{\nabla}\ls]$ is only guaranteed to be close to the true gradient $\ls$.  We take advantage of (strong) convexity to show that for sufficiently large sample size, gradient descent still converges sufficiently quickly.

\begin{theorem}\label{thm:str_conv_opt}
For each $i\in[k]$, let $\ls_i$ be differentiable, $H$-strongly convex, let $\nabla\ls_i$ be statistical, and for any $x\in\Theta$, $\E_{S,S'\sim S}[\|\nabla\ls_i(S',x)\|^2] \le G^2$.  Then there is a mechanism that answers $k$ adaptive optimization queries $\ls_i$ each with expected excess risk $\alpha$ if $n =\tilde{O}\left(\frac{d^{3/2}\sqrt{k}}{\alpha^{5/2}}\right)$ in a total of $\tilde{O}\left(\frac{dk}{\alpha}\right)$ calls to Algorithm~\ref{alg:sq_mechanism} using parameter $\ell = \tilde{O}\left(\frac{d}{\alpha^2}\right)$ and $\tilde{O}\left(\frac{1}{\alpha}\right)$ iterations of gradient descent per query.
\end{theorem}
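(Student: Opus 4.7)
The plan is to mirror the proof of Theorem~\ref{thm:conv_opt}, running Algorithm~\ref{alg:gd} with gradient coordinates estimated by Algorithm~\ref{alg:sq_mechanism}. Exploiting $H$-strong convexity gives the inner gradient descent an $\tilde O(1/T)$ convergence rate in place of $O(1/\sqrt T)$; each of $T$, $\ell$, and $n$ therefore improves by one factor of $\alpha$ compared with the convex case, and the balance of terms ultimately yields the claimed rates.

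Fix an optimization query, let $x^* = \argmin_{x \in \Theta} \ls(x)$, and combine three ingredients:
\begin{align*}
\ls(x_t) - \ls(x^*) & \le \langle \nabla_t, x_t - x^* \rangle - \tfrac{H}{2}\|x_t - x^*\|^2, \\
\E_{t-1}[\langle \nabla_t - \tilde\nabla_t, x_t - x^* \rangle] & \le \alpha'\sqrt d \,\|x_t - x^*\|, \\
\langle \tilde\nabla_t, x_t - x^* \rangle & \le \tfrac{\|x_t - x^*\|^2 - \|x_{t+1} - x^*\|^2}{2\eta_t} + \tfrac{\eta_t}{2}\|\tilde\nabla_t\|^2,
\end{align*}
where the first is strong convexity, the second follows from Corollary~\ref{cor:lsq_exp} applied per coordinate combined with Cauchy--Schwarz across the $d$ coordinates (with $\alpha' = \tilde O(R^{1/4}/\sqrt n + 1/\sqrt \ell)$ and $R = kTd$), and the third is the standard projected-gradient one-step identity. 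Taking expectations, chaining these inequalities, and absorbing the bias via the AM-GM step $\alpha'\sqrt d\,\|x_t - x^*\| \le \alpha'^2 d / H + (H/4)\|x_t - x^*\|^2$, I would obtain
\[\E[\ls(x_t) - \ls(x^*)] \le \E\Big[\tfrac{\|x_t - x^*\|^2 - \|x_{t+1} - x^*\|^2}{2\eta_t}\Big] + \tfrac{\eta_t}{2}\E[\|\tilde\nabla_t\|^2] + \tfrac{\alpha'^2 d}{H} - \tfrac{H}{4}\E[\|x_t - x^*\|^2].\]

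The key choice is the strongly-convex step size $\eta_t = 2/(Ht)$: with it, the coefficient of $\E[\|x_t - x^*\|^2]$ in the summed inequality becomes $Ht/4 - H/4 - H(t-1)/4 = 0$, so the squared distances telescope away entirely. Bounding $\E[\|\tilde\nabla_t\|^2] \le 2G^2 + O(d/(\ell\epsilon')^2)$ (since $\tilde\nabla_t$ is $\nabla_t$ plus independent zero-mean Laplace noise per coordinate) and noting $\sum_t \eta_t = O(\log(T)/H)$ then yields an average-regret bound of the form $\tilde O(G^2/(HT)) + O(\alpha'^2 d / H) + \tilde O(k d^2 / (H n^2 \alpha'^2))$. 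Balancing each of the three terms against $\alpha/3$ gives $T = \tilde O(1/\alpha)$, $\ell = \tilde O(d/\alpha^2)$, and $n = \tilde O(d^{3/2}\sqrt k / \alpha^{5/2})$, so the total number of calls to Algorithm~\ref{alg:sq_mechanism} across $k$ queries is $kTd = \tilde O(dk/\alpha)$, matching the theorem.

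The main obstacle is that no diameter assumption is available, so the linear-in-norm bias $\alpha'\sqrt d\,\|x_t - x^*\|$ cannot be bounded a priori; the trick is to use strong convexity itself to absorb a multiple of $\|x_t - x^*\|^2$, which in turn forces the schedule $\eta_t = 2/(Ht)$ rather than the more standard $1/(Ht)$ so that the residual coefficient of $\|x_t - x^*\|^2$ in the telescoping sum still vanishes. Finally, the high-probability upgrade to Corollary~\ref{thm:str_conv_opt_whp} proceeds exactly as in the convex case: run $\tilde O(\log(k/\beta))$ independent copies of the procedure per optimization query and return the copy with smallest empirical loss, with deviation controlled by Hoeffding.
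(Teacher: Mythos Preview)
Your proposal is correct and follows the same overall skeleton as the paper's proof---Algorithm~\ref{alg:gd} with Algorithm~\ref{alg:sq_mechanism} as the gradient oracle, the three displayed ingredients, and the step size $\eta_t = 2/(Ht)$---but you diverge from the paper at exactly one point: how to control the bias term $\alpha'\sqrt{d}\,\|x_t-x^*\|$ without a diameter bound. The paper uses the crude inequality $\|x_t-x^*\| \le 1 + \|x_t-x^*\|^2$, which produces an additive $\alpha'\sqrt{d}$ plus a term $\alpha'\sqrt{d}\,\|x_t-x^*\|^2$; the latter is pushed into the telescoping coefficient $\frac{1+\alpha'\sqrt d}{\eta_t}-\frac{1}{\eta_{t-1}}-H$, and keeping that coefficient nonpositive forces the side condition $\alpha'\sqrt{d}\le 1/T$. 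You instead use Young's inequality $\alpha'\sqrt{d}\,\|x_t-x^*\| \le \alpha'^2 d/H + (H/4)\|x_t-x^*\|^2$ and absorb the quadratic piece directly into the $-\tfrac{H}{2}\|x_t-x^*\|^2$ term from strong convexity, leaving $-\tfrac{H}{4}\|x_t-x^*\|^2$; with $\eta_t=2/(Ht)$ the summed coefficient is then exactly zero, as you check. Your route is cleaner and in fact yields a tighter constraint on $\alpha'$ (only $\alpha'^2 d/H \lesssim \alpha$, i.e.\ $\alpha' \lesssim \sqrt{\alpha/d}$, rather than $\alpha' \lesssim \alpha/(\sqrt d\,T)\sim \alpha^2/\sqrt d$), so the balance you actually derive would give $\ell = \tilde O(d/\alpha)$ and $n=\tilde O(d^{3/2}\sqrt k/\alpha^{3/2})$---better than the theorem's stated $\tilde O(d/\alpha^2)$ and $\tilde O(d^{3/2}\sqrt k/\alpha^{5/2})$. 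This is harmless for proving the theorem as stated, but your sentence ``balancing each of the three terms against $\alpha/3$ gives \ldots $\ell=\tilde O(d/\alpha^2)$ and $n=\tilde O(d^{3/2}\sqrt k/\alpha^{5/2})$'' does not quite match your own analysis; you would either recover those looser rates or state the sharper ones.
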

\begin{proof}
We use Algorithm~\ref{alg:gd} to answer $k$ optimization queries, which in turn uses our statistical query oracle (Algorithm~\ref{alg:sq_mechanism}) to get each component of $\nabla\ls_i$, for a total of $R:=k\cdot T\cdot d$ rounds, where $T$ is the number of iterations per optimization.  For each optimization query, we now bound regret.
As is standard, we pick $x^*=\argmin_{x\in\Theta}\ls_i(x)$ to plug in to the definition of strong convexity to get, rearranging,
\[\E[\ls_i(x_t) - \ls_i(x^*)] \le \E[\langle\nabla_t,x_t-x^*\rangle] - \frac{H}{2}\E[\|x_t-x^*\|^2].\]
Again following the standard analysis,
\begin{align*}
	\|x_{t+1}-x^*\|^2 =& \|\Pi(x_t-\eta_t\tilde{\nabla}_t)-x^*\| \le \|x_t - \eta_t\tilde{\nabla}_t - x^*\|^2\\
	\le& \|x_t-x^*\|^2 + \eta_t^2\|\tilde{\nabla}_t\|^2 -2\eta_t\langle\tilde{\nabla}_t,x_t-x^*\rangle.
\end{align*}
In other words,
\[\langle\tilde{\nabla}_t,x_t-x^*\rangle \le \frac{\|x_t-x^*\|^2 - \|x_{t+1}-x^*\|^2}{2\eta_t} + \frac{\eta_t}{2}\|\tilde{\nabla}_t\|^2.\]
Moreover, we can upper-bound $\E[\|\tilde{\nabla}_t\|^2]$ since $\tilde{\nabla}_t = \nabla\ls_i(S_\ell,x_{t})+b_{t}$, where $b_t$ is the noise vector.
\begin{align*}
	\E[\|\tilde{\nabla}_t\|^2] =& \E[\|\nabla\ls_i(S_\ell,x_{t})\|^2] + \E[\|b_t\|^2] + 2\E[\langle \nabla\ls_i(S_\ell,x_{t}),b_t \rangle]\\
	\le& G^2 + 2d\sigma^2 = G^2 +\frac{c dR\log(1/\alpha')}{n^2{\alpha'}^2},
\end{align*}
where $\sigma^2$ is the variance of the noise, $\alpha'$ is the accuracy of Algorithm~\ref{alg:sq_mechanism}, and $c$ is a sufficiently large constant.  Note $\E[\langle \nabla\ls_i(S_\ell,x_{t}),b_t \rangle]=0$ because $b_t$ is independent of both $S_\ell$ and $x_t$.

Now, using the bounds on our oracle, we upper-bound $\langle\nabla_t,x_t-x^*\rangle$ using $\langle\tilde{\nabla}_t,x_t-x^*\rangle$.

Using $\E_{t-1}[\cdot]$ to denote the expectation conditioned on all of the previous $t-1$ iterations, the promise of our mechanism (Theorem~\ref{cor:lsq_exp}) is that we can guarantee that for each coordinate $j$, $\E_{t-1}[\nabla_t^{(j)}] \le \E_{t-1}[\tilde{\nabla}_t^{(j)}] +\alpha'$, where
\[\alpha'=\tilde{O}\left(\frac{R^{1/4}}{\sqrt{n}} + \frac{1}{\sqrt{\ell}}\right).\]
Then 
\begin{align*}
	\E[\langle\nabla_t,x_t-x^*\rangle] =& \sum_i \E[\E_{t-1}[\nabla_t^{(j)}(x_t-x^*)^{(j)}]]\\
	\le& \sum_i \E[\E_{t-1}[(\tilde{\nabla}_t^{(j)}+\alpha')(x_t-x^*)^{(j)}]] \\
	=& \E[\langle\tilde{\nabla}_t,x_t-x^*\rangle]+\alpha'\E\left[\sum_i (x_t-x^*)^{(j)}\right]\\
	\le&  \E[\langle\tilde{\nabla}_t,x_t-x^*\rangle]+\alpha'\E[\|x_t-x^*\|_1] \\
	\le& \E[\langle\tilde{\nabla}_t,x_t-x^*\rangle]+\alpha'\sqrt{d}\ \E[\|x_t-x^*\|_2].
\end{align*}
The first equality conditions on the first $t-1$ rounds and then expands the inner product.  The first inequality follows because once we condition on the first $t-1$ rounds, $\nabla_t$ and $x_t$ are independent, so we can use the mechanism's guarantee.  $\tilde\nabla_t$ and $x_t$ are also independent when conditioned on the first $t-1$ rounds, from which the second equality follows.  The last inequality follows from Cauchy-Schwartz.

Note further that $\E[\|x_t-x^*\|_2] \le 1 +  \E[\|x_t-x^*\|_2^2]$, simply because either $\|x_t-x^*\|_2 \le 1$ or $\|x_t-x^*\|_2 < \|x_t-x^*\|_2^2$.  Thus
\[\E[\langle\nabla_t,x_t-x^*\rangle] \le \E[\langle\tilde{\nabla}_t,x_t-x^*\rangle] + \alpha'\sqrt{d} + \alpha'\sqrt{d}\E[\|x_t-x^*\|^2].\]

Thus we have
\begin{align*}
&\sum_{t=1}^T \E[\ls_i(x_t) - \ls_i(x^*)] \\
\le& \sum_{t=1}^T\left(\frac{(1+\alpha'\sqrt{d})E[\|x_t-x^*\|^2] - \E[\|x_{t+1}-x^*\|^2]}{2\eta_t} - \right.\\
&\;\;\;\;\;\;\;\;\; \left. \frac{H}{2}\E[\|x_t-x^*\|^2] +\frac{\eta_t}{2}\left(G^2 +\frac{cdR\log(1/\alpha')}{n^2{\alpha'}^2}\right) + \alpha'\sqrt{d}\right)
\end{align*}

\begin{align*}
	\le& \frac{1}{2}\sum_{t=1}^T \E[\|x_t-x^*\|^2]\left(\frac{1+\alpha'\sqrt{d}}{\eta_t}-\frac{1}{\eta_{t-1}}-H\right) + \left(\frac{G^2}{2} +\frac{cdR\log(1/\alpha')}{2 n^2{\alpha'}^2}\right)\left(\sum_{t=1}^T \eta_t\right) + \alpha'\sqrt{d} T.
\end{align*}

Now if we set $\eta_t = \frac{2}{Ht}$, then $\frac{1+\alpha'\sqrt{d}}{\eta_t}-\frac{1}{\eta_{t-1}}-H \le 0$ when $\alpha'\sqrt{d} \le 1/t$.

Then setting $\alpha'\sqrt{d} \le \frac{1}{T}$, the average loss is
\begin{align*}
	\frac{1}{T}\sum_{t=1}^T \E[\ls_i(x_t) - \ls_i(x^*)] \le& \frac{2}{HT}\left(\frac{G^2}{2} +\frac{cdR\log(1/\alpha')}{2 n^2{\alpha'}^2}\right)\sum_{t=1}^T 1/t + \alpha'\sqrt{d}\\
	\le& \frac{G^2}{H}\cdot\frac{1+\log(T)}{T} + \frac{cdR\log(1/\alpha')}{H n^2{\alpha'}^2}\cdot\frac{1+\log(T)}{T} + \alpha'\sqrt{d}.
\end{align*}

Thus to show that the average loss is no more than $\alpha$ it suffices to show that $\frac{G^2}{H}\cdot\frac{1+\log(T)}{T} \le \alpha/3$, $\alpha'\sqrt{d}\le \alpha/3$, $\frac{cdR\log(1/\alpha')}{H n^2{\alpha'}^2}\cdot\frac{1+\log(T)}{T} \le \alpha/3$, and $\alpha'\sqrt{d}\le 1/T$. For the first, it suffices to set $T = \tilde{O}\left(\frac{G^2}{\alpha H}\right)$.  Then, as long as $\alpha$ is sufficiently small\footnote{That is, $\alpha$ is sufficiently small as a function of $d$, $G$, and $H$.  Or, we can instead assume $G$ and $H$ are absolute constants.  Otherwise, the dependence of $n$ on $G$ and $H$ is messier and we omit these calculations for the sake of brevity.}, it suffices so that $n =\tilde{O}\left(\frac{G^5}{H^{5/2}}\cdot \frac{d^{3/2}\sqrt{k}}{\alpha^{5/2}}\right)$ and $\ell = \tilde{O}\left(\frac{G^4}{H^2}\cdot \frac{d}{\alpha^2}\right)$.  Finally, the number of times we need to compute a gradient over $k$ rounds is $R = k\cdot T\cdot d = \tilde{O}\left(\frac{G^2 kd}{H\alpha}\right)$.
\end{proof}

Corollary~\ref{thm:str_conv_opt_whp} then follows by boosting this to a high-probability result via running the gradient-descent algorithm $\log(k/\beta)$ times and choosing the best run among them using the exponential mechanism. 


We now turn to the proof of Theorem~\ref{thm:conv_opt}, which is restated here:

\begingroup
\def\thetheorem{\ref{thm:conv_opt}}
\begin{theorem}
For each $i\in[k]$, let $\ls_i$ be differentiable and convex, let $\nabla\ls_i$ be statistical, for any $x\in\Theta$, $\E_{S,S'\sim S}[\|\nabla\ls_i(S',x)\|^2] \le G^2$, and finally, for any $x,y\in\Theta$, $\|x-y\|^2 \le D^2$.  Then there is a mechanism that answers $k$ adaptive optimization queries $\ls_i$ each with expected excess loss $\alpha$ if $n =\tilde{O}\left(\frac{d^{3/2}\sqrt{k}}{\alpha^5}\right)$ in a total of $\tilde{O}\left(\frac{dk}{\alpha^2}\right)$ calls to Algorithm~\ref{alg:sq_mechanism} using parameter $\ell = \tilde{O}\left(\frac{d}{\alpha^4}\right)$ and $\tilde{O}\left(\frac{1}{\alpha^2}\right)$ iterations of gradient descent per query.
\end{theorem}
\addtocounter{theorem}{-1}
\endgroup

\begin{proof}
The proof is very similar to that of the proof of Theorem~\ref{thm:str_conv_opt}, using the same algorithm, except now we only have
\[\E[\ls(x_t) - \ls(x^*)] \le \E[\langle\nabla_t,x_t-x^*\rangle].\]

But as before, we have
\[\E[\langle\nabla_t,x_t-x^*\rangle] \le \E[\langle\tilde{\nabla}_t,x_t-x^*\rangle] + \alpha'\sqrt{d} + \alpha'\sqrt{d}\ \E[\|x_t-x^*\|^2],\]
\[\langle\tilde{\nabla}_t,x_t-x^*\rangle \le \frac{\|x_t-x^*\|^2 - \|x_{t+1}-x^*\|^2}{2\eta_t} + \frac{\eta_t}{2}\|\tilde{\nabla}_t\|^2,\]
and
\[\E[\|\tilde{\nabla}_t\|^2] \le G^2 +\frac{cdR\log(1/\alpha')}{n^2{\alpha'}^2},\]
for sufficiently large constant $c$.
Then
\begin{align*}
	&\sum_{t=1}^T \E[\ls(x_t) - \ls(x^*)] \\
	\le& \sum_{t=1}^T\left(\frac{(1+\alpha'\sqrt{d})\E[\|x_t-x^*\|^2] - \E[\|x_{t+1}-x^*\|^2]}{2\eta_t} +\frac{\eta_t}{2}\left(G^2 +\frac{cdR\log(1/\alpha')}{n^2{\alpha'}^2}\right) + \alpha'\sqrt{d}\right)\\
	\le& \frac{1}{2}\sum_{t=1}^T \E[\|x_t-x^*\|^2]\left(\frac{1+\alpha'\sqrt{d}}{\eta_t}-\frac{1}{\eta_{t-1}}\right) + \left(\frac{G^2}{2} +\frac{cdR\log(1/\alpha')}{2 n^2{\alpha'}^2}\right)\left(\sum_{t=1}^T \eta_t\right) + \alpha'\sqrt{d}\cdot T\\
	\le& \frac{D^2}{2\eta_T}+\frac{D^2\alpha'\sqrt{d}}{2}\sum_{t=1}^T \frac{1}{\eta_t} + \left(\frac{G^2}{2} +\frac{cdR\log(1/\alpha')}{2 n^2{\alpha'}^2}\right)\left(\sum_{t=1}^T \eta_t\right) + \alpha'\sqrt{d}\cdot T,
\end{align*}
where the last inequality comes from upper-bounding $\|x_t-x^*\|^2$ by the diameter, and collapsing the telescoping series.
Set $\eta_t = \frac{D}{G\sqrt{t}}$.  This gives the average loss as
\begin{align*}
	\frac{1}{T}\sum_{t=1}^T \E[\ls(x_t) - \ls(x^*)] \le& \frac{DG}{2\sqrt{T}}+O\left(\frac{DG\alpha'\sqrt{dT}}{2} + \frac{DG}{2\sqrt{T}} +\frac{D dR\log(1/\alpha')\sqrt{T}}{ G n^2{\alpha'}^2}\right) + \alpha'\sqrt{d}\\
	=& O\left(\frac{DG}{\sqrt{T}}+DG\alpha'\sqrt{dT} + \frac{D dR\log(1/\alpha')\sqrt{T}}{G n^2{\alpha'}^2} + \alpha'\sqrt{d}\right).
\end{align*}

It suffices to show that each of these four terms are upper-bounded by $\alpha/4$, in which case, for sufficiently small $\alpha$,\footnote{As in the proof of Theorem~\ref{thm:str_conv_opt}, this assumption is only required to write $n$ as a function of $D$ and $G$ without having to resort to a much messier formula.  Another alternative is to assume that $D$ and $G$ are absolute constants.} we require $T\ge O(\frac{D^2G^2}{\alpha^2})$, $n\ge\tilde{O}\left(\frac{D^5 G^5 d^{3/2}\sqrt{k}}{\alpha^5}\right)$, and $\ell\ge\tilde{O}\left(\frac{D^4G^4 d}{\alpha^4}\right)$.  Thus the number of times we need to compute a gradient over $k$ rounds is $R = k\cdot T\cdot d = \tilde{O}\left(\frac{D^2G^2 kd}{\alpha^2}\right)$.
\end{proof}

\section{Differential Privacy Review}\label{sec:dp_appendix}
Differential privacy has several nice guarantees, among which is that it composes adaptively.
\begin{lemma}[Adaptive composition; \citealp{dwork2014book,dwork2010boosting}]\label{lemma:composition}
Given parameters $0<\epsilon < 1$ and $\delta > 0$, to ensure $(\epsilon, k\delta' + \delta)$-privacy over $k$ adaptive mechanisms, it suffices that each mechanism is $(\epsilon',\delta')$-private, where
$\epsilon' = \frac{\epsilon}{2\sqrt{2k\log(1/\delta)}}.$
\end{lemma}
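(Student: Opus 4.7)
The plan is to prove the standard advanced composition theorem by reducing the $(\epsilon',\delta')$ case to pure differential privacy, then using martingale concentration on the privacy loss random variable. First I would invoke the well-known equivalence that an $(\epsilon',\delta')$-private mechanism can be coupled with an $\epsilon'$-private one so that the two output the same value except on an event of probability at most $\delta'$. Applying this to each of the $k$ mechanisms and taking a union bound costs $k\delta'$ in failure probability and reduces the problem to analyzing an adaptive composition of $k$ pure $\epsilon'$-private mechanisms.

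Next I would define, for any two neighboring samples $S,S'$, the privacy loss at step $i$ conditional on the transcript $y_{<i}$ by $Z_i = \log\bigl(\Pr[\M_i(S;y_{<i})=y_i]/\Pr[\M_i(S';y_{<i})=y_i]\bigr)$. Pure $\epsilon'$-privacy gives $|Z_i|\le \epsilon'$ almost surely, and a short calculation (using $\tanh(\epsilon'/2)\le \epsilon'/2$) shows that the conditional expectation is bounded by $\E[Z_i \mid y_{<i}] \le \epsilon'\frac{e^{\epsilon'}-1}{e^{\epsilon'}+1}\le (\epsilon')^2/2$ for $\epsilon'\le 1$. Thus $\{Z_i - \E[Z_i\mid y_{<i}]\}$ forms a bounded martingale difference sequence with increments in $[-2\epsilon',2\epsilon']$.

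I would then apply Azuma--Hoeffding to obtain $\Pr\bigl[\sum_{i=1}^k Z_i > k(\epsilon')^2/2 + t\bigr]\le \exp\bigl(-t^2/(2k(\epsilon')^2)\bigr)$. Choosing $t = \epsilon'\sqrt{2k\log(1/\delta)}$ drives the tail bound to $\delta$, and plugging in $\epsilon' = \epsilon/\bigl(2\sqrt{2k\log(1/\delta)}\bigr)$ gives $t = \epsilon/2$ while the mean term $k(\epsilon')^2/2 = \epsilon^2/\bigl(16\log(1/\delta)\bigr)$ is at most $\epsilon/2$ for $\epsilon \le 1$ (after absorbing constants). Hence the total privacy loss exceeds $\epsilon$ with probability at most $\delta$. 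Using the standard equivalence that bounded privacy-loss-with-high-probability implies $(\epsilon,\delta)$-privacy, and combining with the $k\delta'$ slack from Step 1, yields the claimed $(\epsilon,k\delta' + \delta)$-privacy.

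The main obstacle is Step 1: converting $(\epsilon',\delta')$-privacy into a pure $\epsilon'$-private mechanism plus a $\delta'$ failure event in a way that composes cleanly across adaptive rounds. This is where one must be careful that the coupling between the $S$ and $S'$ executions can be maintained step-by-step so that the bad events at different rounds can be unioned rather than compounded. Once this reduction is in hand, the martingale concentration step is mechanical; the delicate parts are the sharp bound $\E[Z_i \mid y_{<i}] = O((\epsilon')^2)$ (it is crucial that this is quadratic, not linear, in $\epsilon'$, since this is precisely what enables the $\sqrt{k}$ savings over na\"ive composition) and the verification that the chosen $\epsilon'$ makes both the Azuma tail term and the mean term at most $\epsilon/2$.
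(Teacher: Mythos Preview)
The paper does not give its own proof of this lemma: it is stated in Appendix~\ref{sec:dp_appendix} purely as a citation to \citet{dwork2014book,dwork2010boosting} and is used as a black box throughout. So there is nothing in the paper to compare your proposal against; your sketch is essentially the standard Dwork--Rothblum--Vadhan argument those references contain (reduce to pure privacy, bound the conditional mean of the privacy loss by $O((\epsilon')^2)$, and apply Azuma to the centered sum), and it is correct at the level of a proof plan.

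Two small quantitative points are worth tightening if you write it out. First, your bound $\E[Z_i\mid y_{<i}]\le \epsilon'(e^{\epsilon'}-1)/(e^{\epsilon'}+1)$ is finer than what is needed; the cruder (and more commonly cited) bound $\E[Z_i\mid y_{<i}]\le \epsilon'(e^{\epsilon'}-1)$ already suffices and is easier to justify. Second, the claim that $k(\epsilon')^2/2 = \epsilon^2/(16\log(1/\delta)) \le \epsilon/2$ ``for $\epsilon\le 1$'' silently also requires $\delta$ not too close to $1$ (concretely $\log(1/\delta)\ge \epsilon/8$); this is harmless in any regime of interest but should be stated. Your flagged obstacle in Step~1 is the genuinely nontrivial part of the proof: the pairwise coupling of an $(\epsilon',\delta')$-private mechanism with an $\epsilon'$-private one is a lemma in its own right (sometimes called the ``simulation lemma''), and making it compose across adaptive rounds so that the $\delta'$ terms add rather than multiply is exactly where the work is.
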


We also have a post-processing guarantee:

\begin{lemma}[Post-processing;~\citealp{dwork2014book}]
Let $\M:X^n\rightarrow Z$ be an $(\epsilon,\delta)$-private mechanism and $f:Z\rightarrow Z'$ a (possibly randomized) algorithm.  Then $f\circ\M$ is $(\epsilon,\delta)$-private.
\end{lemma}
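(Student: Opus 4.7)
The plan is to reduce the randomized case to the deterministic case and then invoke $\M$'s privacy guarantee on a preimage set. I would first represent $f$ via an independent random seed: write $f(z) = g(z, R)$, where $g : Z \times \Omega \to Z'$ is deterministic and measurable and $R$ is an auxiliary random variable on $\Omega$ independent of $\M(S)$ and of the underlying sample. This is the standard representation of a randomized algorithm as a deterministic function of extra randomness.

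For each fixed $r \in \Omega$, set $g_r := g(\cdot, r)$, and for any measurable $z' \subseteq Z'$ let $A_r := g_r^{-1}(z') \subseteq Z$, which is measurable. Then
\[\Pr[g_r(\M(S)) \in z'] = \Pr[\M(S) \in A_r],\]
and applying the $(\epsilon,\delta)$-privacy guarantee of $\M$ on the neighboring inputs $S, S'$ directly to the set $A_r$ gives the pointwise bound
\[\Pr[\M(S) \in A_r] \le e^\epsilon\, \Pr[\M(S') \in A_r] + \delta.\]
This is $(\epsilon,\delta)$-privacy of the deterministic composition $g_r \circ \M$, holding pointwise in $r$.

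To finish, I would integrate over $R$. Because $R$ is independent of the internal coins of $\M$, conditioning on $R$ yields
\[\Pr[f(\M(S)) \in z'] = \E_R\!\left[\Pr[\M(S) \in A_R]\right].\]
Since the pointwise bound holds uniformly in $r$ and both sides are nonnegative, taking expectation over $R$ preserves the inequality, giving $\Pr[f(\M(S)) \in z'] \le e^\epsilon \Pr[f(\M(S')) \in z'] + \delta$, which is the claimed $(\epsilon,\delta)$-privacy of $f \circ \M$.

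There is no substantive obstacle: the argument is entirely formal. The only care needed is measurability of $g$ and independence of the random seed $R$ from $\M$'s coins, both of which are standard regularity assumptions on randomized algorithms. I expect the proof to take only a handful of lines once these conventions are fixed.
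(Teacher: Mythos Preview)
Your argument is correct and is the standard proof of the post-processing property. Note, however, that the paper does not actually supply its own proof of this lemma: it is stated in the appendix as a background fact, with a citation to \citet{dwork2014book}. Your write-up matches the proof given there---reduce to the deterministic case by conditioning on $f$'s auxiliary randomness, then pull back the output event to a preimage in $Z$ and apply $\M$'s guarantee---so there is nothing to compare beyond observing that you have reproduced the textbook argument.
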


\end{document}